\theoremstyle{plain}
\newtheorem{theorem}{Theorem}[section]
\newtheorem{proposition}[theorem]{Proposition}
\newtheorem{lemma}[theorem]{Lemma}
\newtheorem{definition}[theorem]{Definition}
\newtheorem{assumption}[theorem]{Assumption}
\newcommand{\ours}{\emph{SoTeacher}\xspace}
\newcommand{\chengyu}[1]{}
\newcommand{\todo}[1]{}
\newcommand{\maybe}[1]{}
\newcommand{\smallsection}[1]{\noindent\textbf{#1.~~~~}}
\renewcommand{\cite}[1]{\citep{#1}}
\title{Toward Student-oriented Teacher Network Training for Knowledge Distillation}
\author{%
Chengyu Dong$^1$
\; Liyuan Liu$^2$ \; Jingbo Shang$^1$\; \\
$^1$University of California, San Diego\;\;\; $^2$Microsoft Research\\
{\small\texttt{\{cdong, jshang\}@ucsd.edu}}\;\;\; {\small \texttt{lucliu@microsoft.com} }
}
\begin{document}

\maketitle

\begin{abstract}
How to conduct teacher training for knowledge distillation is still an open problem. 
It has been widely observed that a best-performing teacher does not necessarily yield the best-performing student, suggesting a fundamental discrepancy between the current teacher training practice and the ideal teacher training strategy. 
To fill this gap, we explore the feasibility of training a teacher that is oriented toward student performance with empirical risk minimization (ERM).
Our analyses are inspired by the recent findings that the effectiveness of knowledge distillation hinges on the teacher’s capability to approximate the true label distribution of training inputs.
We theoretically establish that ERM minimizer can approximate the true label distribution of training data as long as the feature extractor of the learner network is Lipschitz continuous and is robust to feature transformations. 
In light of our theory, we propose a teacher training method \ours which incorporates Lipschitz regularization and consistency regularization into ERM. 
Experiments on benchmark datasets using various knowledge distillation algorithms and teacher-student pairs confirm that \ours can improve student accuracy consistently.

\end{abstract}

\section{Introduction}
\label{sect:intro}

Knowledge distillation aims to train a small yet effective \emph{student} neural network following the guidance of a large \emph{teacher} neural network~\cite{Hinton2015DistillingTK}.
It dates back to the pioneering idea of model compression~\cite{bucilua2006model} and has a wide spectrum of real-world applications, such as 
recommender systems~\cite{Tang2018RankingDL, Zhang2020DistillingSK},
question answering systems~\cite{Yang2020ModelCW, Wang2020LearningTE} and machine translation~\cite{Liu2020CrosslingualMR}.


Despite the prosperous interests in knowledge distillation, one of its crucial components, teacher training, is largely neglected. 
The existing practice of teacher training is often directly targeted at maximizing the performance of the teacher, which does not necessarily transfer to the performance of the student.
Empirical evidence shows that a teacher trained toward convergence will yield an inferior student~\cite{Cho2019OnTE} and regularization methods benefitting the teacher may contradictorily degrade student performance~\cite{Mller2019WhenDL}. 
As also shown in Figure~\ref{fig: overfitting}, the teacher trained toward convergence will 
consistently reduce the performance of the student after a certain point.
This suggests a fundamental discrepancy between the common practice in teacher training and the ideal learning objective of the teacher that orients toward student performance. 


\begin{figure*}[t]
    \centering
    \begin{subfigure}[t]{0.4\linewidth}
        \centering
        \includegraphics[width=1.0\textwidth]{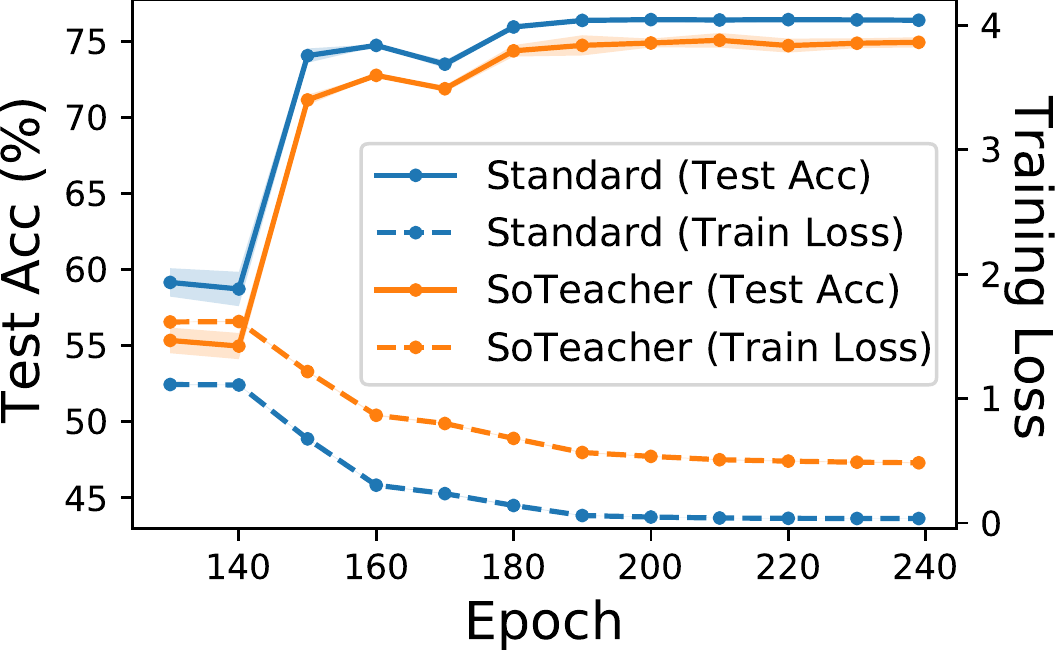}
        \vspace{-4mm}
        \caption{Teacher performance with different numbers of teacher training steps. } 
        \label{fig:teacher-performance}
    \end{subfigure}
    \hspace{5em}
    \begin{subfigure}[t]{0.35\linewidth}
        \centering
        \includegraphics[width=1.0\textwidth]{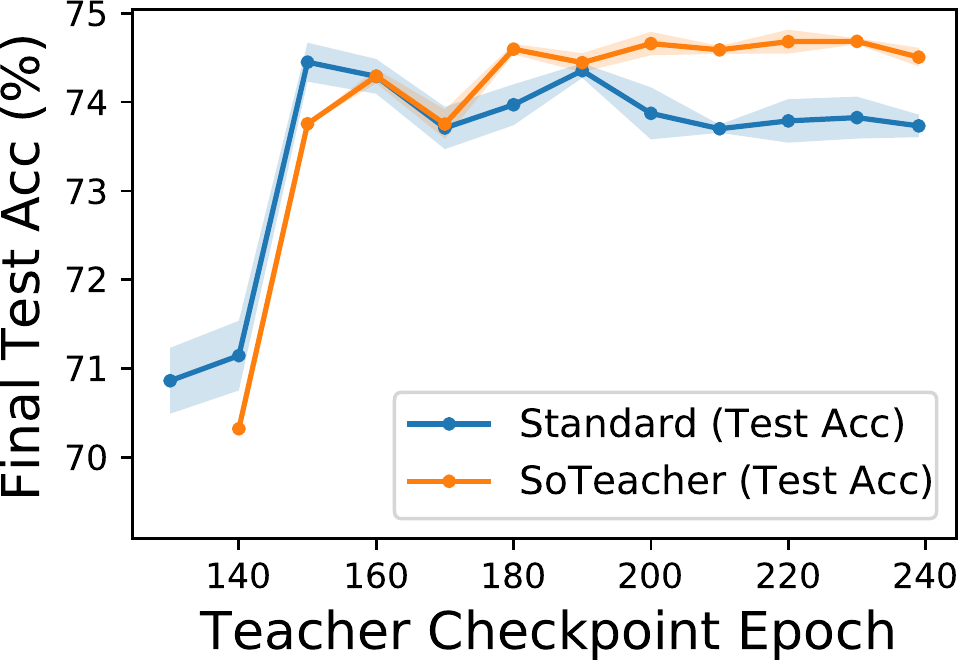}
        \vspace{-4mm}
        \caption{Student performance when distilled from different teacher checkpoints.}
        \label{fig:student-performance}
    \end{subfigure}
    \vspace{-1mm}
    \caption{
    We train teacher models on CIFAR-100, saving a checkpoint every 10 epochs, and then use this checkpoint to train a student model through knowledge distillation. 
    With our method, the teacher is trained with a focus on improving student performance, leading to better student performance even if the teacher's own performance is not as high.
    }
    \label{fig: overfitting}
\end{figure*}

In this work, we explore both the theoretical feasibility and practical methodology of training the teacher toward student performance. Our analyses are built upon the recent understanding of knowledge distillation from a statistical perspective. 
In specific, \citet{Menon2021ASP} show that the soft prediction provided by the teacher is essentially an approximation to the true label distribution, and true label distribution as supervision for the student improves the generalization bound compared to one-hot labels. 
\citet{Dao2021KnowledgeDA} show that the accuracy of the student is directly bounded by the distance between teacher's prediction and the true label distribution through the Rademacher analysis.
Based on the above understanding, a teacher benefitting the student should be able to learn the true label distribution of the \emph{distillation data}
~\footnote{For simplicity, we refer to the training data of the student model in knowledge distillation as the distillation data~\cite{stanton2021does}}. 
Since practically the distillation data is often reused from the teacher's training data, the teacher will have to learn the true label distribution of its own training data. 
This might appear to be infeasible using standard empirical risk minimization,
as the teacher network often has enough capacity to fit all one-hot training labels, in which case, distilling from teacher predictions should largely degrade to directly training with one-hot labels. 
Existing works tend to evade this dilemma by distilling from teacher predictions only on data that is not used in teacher training~\cite{Menon2021ASP, Dao2021KnowledgeDA}.

Instead, we directly prove the feasibility of training the teacher to learn the true label distribution of its training data. 
We show that the standard empirical risk minimizer can approach the true label distribution of training data under a mixed-feature data distribution, as long as the feature extractor of the learner network is Lipschitz continuous and is robust to feature transformations. 

In light of our theory, we show that explicitly imposing the Lipschitz and consistency constraint in teacher training can facilitate the learning of the true label distribution and thus improve the student performance. 
We conduct extensive experiments on two benchmark datasets using various knowledge distillation algorithms and different teacher-student architecture pairs.
The results confirm that our method can improve student performance consistently and significantly.

We believe our work is among the first attempts to explore the theory and practice of training a \emph{student-oriented} teacher in knowledge distillation.
To summarize, our main contributions are as follows.
\begin{itemize}[leftmargin=*,nosep]
    \item We show that it is theoretically feasible to train the teacher to learn the true label distribution of the distillation data even with data reuse, explaining the effectiveness of the current knowledge distillation practice.
    \item We show that by adding Lipschitz and consistency regularization during teacher training, it can better learn the true label distribution and improve knowledge consistently.
\end{itemize}

\section{Preliminaries} 
\label{sect:prelim}



We study knowledge distillation in the context of multi-class classification. 
Specifically, we are given a set of training samples $\mathcal{D}=\{(x^{(i)}, y^{(i)})\}_{i\in[N]}$, where $[N] \coloneqq \{1,2,\cdots,N\}$. $\mathcal{D}$ is drawn from a probability distribution $p_{X,Y}$ that is defined jointly over input space $\mathcal{X}$ and label space $\mathcal{Y} = [K]$.
For convenience, we denote $1(y) \in \mathbb{R}^K$ as the one-hot encoding of label $y$.

\smallsection{Learning Objective of Teacher in Practice}
In common practice, the teacher network $f$ is trained to minimize the empirical risk given a loss function $\ell$, namely
\begin{equation}
    \label{eq:empirical-risk}
    \min_{f} \mathbb{E}_{(x,y)\in \mathcal{D}} ~\ell(f(x), y),
\end{equation}
where $\mathbb{E}_{(x,y) \in \mathcal{D}}$ is the empirical expectation.

\smallsection{Ideal Learning Objective of Teacher}
Recent advances in understanding knowledge distillation suggest a teacher should approximate the true label distribution of the distillation data, which is often reused from the teacher's training data. From this perspective, ideally the teacher should learn the true label distribution of its training data, namely
\begin{equation}
    \label{eq:ideal-teacher}
    \min_{f} \mathbb{E}_{(x,y)\in\mathcal{D}} ~ \|f(x) - p^*(x)\|.
\end{equation}
Here, $p^*(x) \coloneqq p_{Y|X}(\cdot|x)$ denotes the
\emph{true label distribution} of an input $x$, namely the (unknown) category distribution that its label is sampled from, which is not necessarily one-hot. And $\|\cdot\|$ can be an arbitrary $p$-norm.


\smallsection{Our Research Questions}
One can find that there is a fundamental discrepancy between the learning objective of the teacher in the common practice and the ideal learning objective of the teacher. 
In particular, minimization of Eq.~(\ref{eq:empirical-risk}) would lead to $f(x) = 1(y)$ for any input $x$, which significantly deviates from $p^*(x)$ 
and thus challenges the effectiveness of knowledge distillation. 
Therefore, in this work, we explore the following two questions.
\begin{itemize}[nosep,leftmargin=0.5cm]
    \item[(i)]  \emph{Can a teacher network learn the true label distribution of the training data with the standard teacher training practice?}
    \item[(ii)] \emph{How to train a teacher to better learn the true label distribution and improve student performance?}
\end{itemize}
We will present our findings of these two questions in Sec.~\ref{sect:theory} and Sec.~\ref{sect:method}, respectively.

\section{Theoretical Feasibility to Learn True Label Distribution of Training Data}
\label{sect:theory}


We now explore the theoretical feasibility of training a teacher network that learns the true label distribution of the training data under the  empirical risk minimization framework.
Unless specifically stated otherwise, by ``data'' we refer to the data used to train the teacher, and by ``network'' we refer to the teacher network.
Note that throughout the discussion here, our major focus is the \emph{existence} of a proper minimizer, instead of the details of the optimization process.

\subsection{Notations and Problem Setup}
\label{sect:problem-setup}

\newcommand{\datadstr}{mixed-feature distribution}
\smallsection{Notation}
Here, we introduce our notations in addition to the ones mentioned in Section~\ref{sect:prelim}. We will use calligraphic typefaces to denote sets, \emph{e.g.}, the dataset $\mathcal{D}$. We use $|\mathcal{D}|$ to denote the size of the set. We use $\circ$ to denote function composition. We use $\tilde{O}(\eta)$ to denote polynomial terms of $\eta$.

\smallsection{Data Distribution}
We consider a data distribution which we refer to as the \emph{\datadstr}. Our distribution can be viewed as a simplified version of the ``multi-view distribution'' introduced in~\citet{AllenZhu2020TowardsUE}. Our distribution can also be viewed as a variation of Latent Dirichlet Allocation (LDA)~\citep{Blei2001LatentDA}, a generative data distribution widely used to model text data.

\chengyu{This is basically bag-of-feature. But bag-of-feature is not that unrealistic, especially for image classification. See some works such as `WHEN AND WHY VISION-LANGUAGE MODELS BEHAVE LIKE BAGS-OF-WORDS, AND WHAT TO DO ABOUT IT?`}

\chengyu{Considering change to a conditional LDA modeling. This can at least solve two problems (and also the most evident problems currently). (1) The problem of correlation between feature names when sampling from the dictionary. If the label is specified, then the number of feature names in an image as well as the joint distribution over the feature names can be easily defined, namely first specify $y$, then $M\sim p_{\mathbb{N^+}}(\cdot|y)$ and then $z_1, \cdots, z_M \sim p_{\mathcal{Z}^{M}}(\cdot| y)$, where $p_{\mathcal{Z}^{M}}(\cdot | y) = \prod_{m=1}^M p_{\mathcal{Z}}(z_m | y)$ if assume independent? (2) The problem of the composition of label distribution. Because the label distribution of the input will naturally be the product of label distributions of the features. $p_Y(\cdot|x) \sim p(x|y) = p(\{z_1, \cdots, z_M\} | y) = \prod_{m=1}^M p(z_m | y) \sim \prod_{m=1}^M p_Y(\cdot | z_m)$. (3) And potentially, the assumption on the point-wise mutual information (for the second point in proving the central claim). Because $p_{\mathcal{Z}}(z_m | y)$ (related to sampling of feature names) and $p_Y(\cdot | z_m)$ (true label distribution of the feature) are now naturally correlated, just need to consider the distribution of $z$ sampled from all labels, not just a single label.}
\chengyu{But here it is a little bit weird since we already know the label before generation. But the label of the generated input still follows a distribution instead of one-hot. It makes sense in some way as the features are all ambiguous and thus some information is lost by converting a label into an image and convert (sample) to label again.}

\begin{definition}[Mixed-feature distribution]
\label{definition:data-distribution}
We first generate the input $x$ following LDA. We define a feature vocabulary $\mathcal{Z}$ that denotes the names of all possible features in the data inputs. For example, in image classification we can have $\mathcal{Z} = \{\text{`eye'}, \text{`tail'}, \text{`wheel'}, ...\}$. 
Now for each data example $i$, 
\begin{enumerate}[leftmargin=*]
    \item Sample $M$ feature names from $\mathcal{Z}$, namely $z_m \sim p_Z$, where $p_Z$ is a discrete distribution defined over $\mathcal{Z}$.

    \item For each feature name $z$, we sample its input representation $x_z \in \mathbb{R}^b$, namely $x_{z} \sim p_X(\cdot|z)$, where $p_X(\cdot|z)$ is a continuous distribution with finite variance, namely $\text{Var}[X|z] \le \nu_z$. The finite variance here means that the representation of the same feature sampled in different inputs should be similar. 

    \item For each feature name $z$, we transform its input representation by a function $\gamma: \mathbb{R}^b \to \mathbb{R}^b$, where $\gamma$ is sampled from a set of possible transformation functions $\mathcal{T}$, namely $x_z \coloneqq \gamma(x_z)$. For example, in image classification $\gamma$ can be rotation, mirroring, or resizing. 
    
    \item We concatenate the transformed representations of these features to obtain the input, namely $x = (x_{z_1}, x_{z_2}, \cdots, x_{z_M})$, where $x \in \mathbb{R}^{b\times M}$. This views each data input as a concatenation of $M$ patches and each patch is a $b$-dimensional vector,
    which follows the assumption in \citet{AllenZhu2020TowardsUE}. 
\end{enumerate}

Next, we generate the label $y$. We assume each feature name defines a label distribution $p_Y(\cdot | z)$. The label distribution of an input $x$ is the geometric average of the label distributions of the feature names, namely $y \sim p_Y(\cdot|x) \coloneqq (\prod_{m} p_Y(\cdot|z_m))^{1/M}.$
\todo{mention this is element-wise product, namely $\odot$}
\end{definition}

Note that in our data distribution, the assumption that each patch is a vector is solely for the simplicity of illustration and can be trivially generalized, as our theory is not dependent on it. The specific shape of each patch can be arbitrary, for example, can be a 3-rank tensor (height, width, channel) in image classification.

One difference between our data distribution and ``multi-view distribution'' is that one can define different label sets $\mathcal{Y}$ for the same input data under our data distribution. 
This is more realistic as it models datasets that have coarse-grained/fine-grained label sets respectively but the same input data.

\chengyu{
\smallsection{Model true label distribution under ``Context''}
``Context'' (in its simplest form) is essentially the effect of correlation between features. Namely co-occurring features imply much more information on the labels than each of these features combined. This is what transcends the ``average''  assumption.
\begin{equation}
    \left\|p_Y(\cdot | \{z_1, z_2, \cdots, z_M\}) -  \left(\prod_{m} p_Y(\cdot|z_m)\right)^{1/M}\right\| \gg  0.
\end{equation}
This effect can also be attributed to only subgroups, for example, only two features among multiple features are highly correlated.
\begin{equation}
    \left\|p_Y(\cdot | \{z_1, z_2\}) -  \sqrt{p_Y(\cdot|z_1)\odot p_Y(\cdot|z_1) }\right\| \gg  0.
\end{equation}
We can define a generic true label distribution composition as
\begin{equation}
    \text{Composition}: \{p_Y(\cdot|z_m)\}_{m=1,\cdots, M} \mapsto  p_Y(\cdot | \{z_1, z_2, \cdots, z_M\}).
\end{equation}
}

\smallsection{Network Architecture}
\chengyu{Feature map and representation maybe refer to different things. Feature map usually means one or multiple parameterized transformations in the feature extractor. Representation usually means activations.}
We consider a multi-layer neural network that produces probabilistic outputs, namely $f: \mathbb{R}^{b \times M} \rightarrow [0,1]^{K}$. The network consists of a feature extractor and a classification head, namely $f \coloneqq f_C \circ f_E$, defined as follows respectively.
\begin{itemize}[leftmargin=*]
    \item $f_{E}: \mathbb{R}^{b \times M} \to \mathbb{R}^{M\times d}$ denotes the feature extractor, whose architecture can be arbitrary as long as it processes each patch independently. 
    With a little abuse of notation, we will also write $h_m \coloneqq f_E(x_m)$.
    
    
    \item $f_{C}: \mathbb{R}^{M\times d} \to \mathbb{R}^K$ denotes the probabilistic classification head, which consists of a $1$x$1$ convolutional layer and a modified Softmax layer, namely $f_C(h) = \widetilde{\text{Softmax}}(w_C h)$, where $w_C \in \mathbb{R}^{1\times K\times d}$. The $1$x$1$ convolutional layer is similar to the assumption in \citep{AllenZhu2020TowardsUE}, while the modified Softmax is slightly different from the standard Softmax in terms of the denominator, namely
    $
    \widetilde{\text{Softmax}}(\hat{h}) = \frac{\exp(1/M\sum_m \hat{h}_m)}{(\prod_m \sum_k \exp(\hat{h}_{m, k}))^{1/M}},
    $
    where $\hat{h} \coloneqq w_c h$.
\end{itemize}

\chengyu{Note that patchwise-convnet is proposed in \url{https://arxiv.org/pdf/2112.13692.pdf} (see architecture defined here \url{https://github.com/facebookresearch/deit/blob/main/patchconvnet_models.py}), and they replace the pooling layer with one attention layer, which exactly matches our architecture with attention to model the cross-patch relation.}

\chengyu{The classification head here is defined as the map from the pre-pooling activations to the probabilistic outputs. Pre-pooling activations are not penultimate activations, which usually refer to the activations before the fully connected layer.}

\todo{Experiment on this modified softmax layer? It strongly depends on the assumption of the true composition of features. But I believe geometric mean is a good assumption.
More concretely, we can experiment with average pooling, an all $1$ fully-connected last layer. 
}
\todo{We can also experiment on two-stage training with this modified softmax layer. Namely, first train the feature extractor using standard architecture, then train this modified classifier only with the feature extractor fixed}
\chengyu{But this is just turning a single teacher network into an ensemble of multiple subnetworks (one subnetwork per patch) based on the later derivation. It's just that standard ensemble average logits, while we average log-softmax.}
\chengyu{Feedback: Appears to improving uncertainty (student's) but not accuracy}

\todo{We should extract this part as a new paper, something like "A Feature Invariance Perspective of Learning", incorporating learning theory part, and the group minimization part on MNIST. The distillation is nothing but a simple application. }
\chengyu{I feel the optimization doesn't matter here as long as some optimal can be found. If gradient descent is more efficient then we should use gradient descent. The nature uses a genetic algorithm, which is not efficient but probably finds a better optimal.}

\todo{Generalization theory: If learned true label distribution of each feature, then the classifier (argmaxed true label distribution) is Bayes-optimal up to some robustness error on recognizing each feature}

\todo{Synthetic experiments: (1) Does each feature head really learn treu label distribution? (2) Can the model still learn with feature extractor fixed, and only fine-tuning the classifier (I believe I've seen some work on this before, like random initialized large network is good enough. And large is necessary for robust (There is some existing theory on this, or we can do some experiments to verify it, like the robustness of randomly initialized network with verified width.))? See `Intriguing Properties of Randomly Weighted Networks: Generalizing While Learning Next to Nothing` and `Deep Image Prior`}

\chengyu{I would suggest use an adaptation of the SUN dataset (scene recognization) since the feature can be easily defined, and paritial-to-whole learning phenomenon is already empirical verified. Possible simplification of the dataset is we manually construct a feature vocabulary containing multiple fixed batches (e.g., bed, desk, window, etc.), and randomly sample them following our LDA variant to construct a labeled dataset. We only need to demonstrate our construction of the dataset, so don't really have to be realistic.}

\chengyu{I feel patch-wise learning is suitable for vision transformer, although the classification layer in ViT is based on the [CLS] token, but it can also be viewed as a simple average of the embeddings of all patch tokens. Should we switch experiments to vision transformers? }

\chengyu{For some interesting properties of ViT that may be related to patch-wise learning. See `Emerging properties in self-supervised vision transformers` and `Intriguing properties of vision transformers.`}

\chengyu{Actually using 1x1 convolution layer instead of an fc layer is widely used in semantic segmentation tasks.}

\chengyu{Recently people have shown that MOE is provably data-efficient \url{https://arxiv.org/pdf/2306.04073.pdf}. Note that MOE is closer to the network architecture we need here, where the feature extractors are parallel and are not interacting with each other until the very last classification layer. They also adopt a patch view of images that is very similar to our assumption on the data distribution.}

\chengyu{\smallsection{Learning true label distribution under ``Context''}
The classification head needs to be consistent with the true composition model of label distributions under ``context''. Currently, the true composition model of label distributions is a geometric average, thus the classification head is also a geometric average (Softmax).}

\smallsection{Learning}
Given a dataset $\mathcal{D}$, we train the network under the empirical risk minimization framework as indicated by Eq.~(\ref{eq:empirical-risk}). We consider cross-entropy loss $\ell(f(x), y) = -1(y)\cdot \log f(x)$. 

\subsection{A Hypothetical Case: Invariant Feature Extractor}
For starters, we investigate a hypothetical case where the feature extractor is \emph{invariant}, which means that it can always produce the same feature map given the same feature patch, regardless of which input this feature patch is located at or which transformation is applied to this feature patch. Given such an assumption, showing the learning of true label distribution of the training data can be greatly simplified. We will thus briefly walk through the steps towards this goal to shed some intuitions.

\begin{definition}[Invariant feature extractor]
We call $f_E$ an invariant feature extractor, if for any two inputs $i$ and $j$, for any two transformations $\gamma$ and $\gamma'$, $f_E(\gamma(x^{(i)}_m)) = f_E(\gamma'(x^{(j)}_{m'}))$, as long as $z_m = z_{m'}$, namely the feature names of the patches $m$ and $m'$ match.
\end{definition}

We first show that given an invariant feature extractor, the minimizer of the empirical risk has the property that its probabilistic prediction of each feature converges to the sample mean of the labels whose corresponding inputs contain this feature.
\begin{lemma}[Convergence of the probabilistic predictions of features]
\label{lemma:prediction-feature}
Let $\bar{y}_z \coloneqq \frac{1}{N} \sum_{\{i| z\in\mathcal{Z}^{(i)}\}} 1(y^{(i)})$, where $\mathcal{Z}^{(i)}$ denote the set of feature names in the $i$-th input, and thus $\{i| z\in\mathcal{Z}^{(i)} \}$ denotes the set of inputs that contain feature $z$.
Let $f^*$ be a minimizer of the empirical risk~(Eq.~(\ref{eq:empirical-risk})) and assume $f^*_{E}$ is an invariant feature extractor. Let $p_{f^*}(x_z) \coloneqq \text{Softmax}(w_C  f^*_E(x_z))$ be the probabilistic prediction of feature $z$. We have
\begin{equation}
    p_{f^*}(x_z) = \bar{y}_z.
\end{equation}
\end{lemma}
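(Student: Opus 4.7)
The plan is to exploit the special structure of the modified Softmax head together with the invariance of $f_E$ to decompose the empirical risk into a sum of independent per-feature cross-entropy problems, each of which has a closed-form minimizer equal to the empirical label average.

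First, I would unpack the modified Softmax. Writing $\hat{h}_m = w_C f_E(x_{z_m})$ and using the identity $\exp(\tfrac{1}{M}\sum_m \hat{h}_{m,k}) = \prod_m \exp(\hat{h}_{m,k})^{1/M}$, a direct manipulation gives
\begin{equation}
    f(x)_k \;=\; \prod_{m=1}^{M} \bigl(\text{Softmax}(\hat{h}_m)\bigr)_k^{1/M},
\end{equation}
so the network output factorizes as the componentwise geometric mean of the per-patch Softmax vectors $p_f(x_{z_m})$. Under invariance, $p_f(x_{z_m})$ depends only on the feature name $z_m$, so the same vector is attached to every occurrence of $z_m$ across the dataset regardless of input index or transformation.

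Second, plugging this factorization into the cross-entropy loss and taking logs gives $\ell(f(x^{(i)}), y^{(i)}) = \tfrac{1}{M}\sum_m \ell_{\mathrm{CE}}(p_f(x_{z_m^{(i)}}), y^{(i)})$. Summing over inputs and swapping the order of summation to group terms by feature name $z$, the empirical risk collapses to
\begin{equation}
    L(f) \;=\; -\frac{1}{NM} \sum_{z\in\mathcal{Z}} \Bigl(\sum_{(i,m):\,z_m^{(i)}=z} 1(y^{(i)})\Bigr) \cdot \log p_f(x_z),
\end{equation}
and, by invariance, the vectors $\{p_f(x_z)\}_{z\in\mathcal{Z}}$ can be chosen independently, so the minimization separates across $z$.

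Third, each per-feature subproblem is the standard exercise of minimizing $-a \cdot \log q$ over the probability simplex for a fixed nonnegative vector $a$; the minimizer is $q = a / \|a\|_1$. This yields $p_{f^*}(x_z) \propto \sum_{(i,m):\,z_m^{(i)}=z} 1(y^{(i)})$, which coincides with the lemma's $\bar{y}_z$ up to the normalization convention adopted in the statement. The main obstacle I anticipate is not the analysis itself but verifying that the minimization really does decouple across features, \emph{i.e.}, that the combination of an invariant $f_E$ with the $1{\times}1$ convolutional head has enough expressive capacity to realize an arbitrary target probability vector at each feature name simultaneously; this is implicit in the setup but is the one place where the argument quietly relies on a capacity assumption.
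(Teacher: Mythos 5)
Your proposal is correct and follows essentially the same route as the paper's proof: factor the modified Softmax into a geometric mean of per-patch Softmax vectors, use invariance to collapse the empirical risk into a sum of per-feature cross-entropy problems, and solve each on the simplex (the paper does this via KKT, which gives the same $q = a/\|a\|_1$ answer you obtain). Your two side remarks are also apt: the paper's own derivation yields the $1/N_z$-normalized average even though the lemma statement writes $1/N$, and the paper likewise treats the per-feature vectors $p_z$ as freely optimizable variables, so the implicit capacity assumption you flag is present there too.
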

The intuition here is that since the feature extractor is invariant and the number of possible features is limited, the feature maps fed to the classification head would be a concatenation of $M$ vectors selected from a fixed set of $|\mathcal{Z}|$ candidate vectors, where each vector corresponds to one feature in the vocabulary. Therefore, the empirical risk can be regrouped as 
$
-\frac{1}{N} \sum_i 1(y^{(i)}) \cdot \log f^*(x^{(i)})
= -  \frac{1}{M} \sum_{z\in\mathcal{Z}} \bar{y}_z \cdot \log p_{f^*}(x_z).
$
Then by Gibbs' inequality, the empirical risk can be minimized only when the probabilistic prediction  of each feature $p_{f^*}(x_z)$ is equal to $\bar{y}_z$.

We now proceed to show that the above sample mean of multiple labels $\bar{y}_z$ will converge to the average distribution of these labels, even though they are non-identically distributed. Since each label is a categorical random variable, this directly follows the property of multinomial distributions~\citep{Lin2022ThePM}.
\begin{lemma}[Convergence of the sample mean of labels]
\label{lemma:sample-mean-feature}
Let $\bar{p}(\cdot|z) = \frac{1}{N} \sum_{\{i| z\in\mathcal{Z}^{(i)} \}} p_Y(\cdot | x^{(i)})$, we have with probability at least $1 - \delta$,
\begin{equation}
\left\| \bar{y}_z - \bar{p}(\cdot|z)\right\| \le \tilde{O}\left(\sqrt{K N^{-1} M |\mathcal{Z}|^{-1} \delta^{-1}}\right).    
\end{equation}
\end{lemma}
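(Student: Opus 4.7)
The plan is to exploit that, conditional on the inputs $\{x^{(i)}\}_{i\in[N]}$, the labels are mutually independent with $1(y^{(i)})$ drawn from the categorical distribution $p_Y(\cdot|x^{(i)})$. Writing $\mathcal{I}_z := \{i : z\in\mathcal{Z}^{(i)}\}$ and $n_z := |\mathcal{I}_z|$, we can rewrite
\begin{equation*}
    \bar{y}_z - \bar{p}(\cdot|z) \;=\; \frac{1}{N}\sum_{i\in\mathcal{I}_z}\xi^{(i)}, \qquad \xi^{(i)} := 1(y^{(i)}) - p_Y(\cdot|x^{(i)}),
\end{equation*}
as an average of independent, mean-zero random vectors in $\mathbb{R}^K$ given the inputs. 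This matches the multinomial concentration setting of \citet{Lin2022ThePM}, with the complication that the summands are not identically distributed.

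The core is a second-moment argument. For any categorical random vector with parameter $p$, one has $\mathbb{E}\|1(y)-p\|_2^2 = 1-\|p\|_2^2 \le 1$, so by conditional independence of the $\xi^{(i)}$,
\begin{equation*}
    \mathbb{E}\bigl[\|\bar{y}_z - \bar{p}(\cdot|z)\|_2^2 \,\bigm|\, \{x^{(i)}\}\bigr] \;\le\; \frac{n_z}{N^2}.
\end{equation*}
Applying Markov's inequality in the squared $\ell_2$ norm yields $\|\bar{y}_z - \bar{p}(\cdot|z)\|_2 \le \sqrt{n_z/(N^2\delta')}$ with probability at least $1-\delta'$. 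To accommodate the arbitrary $p$-norm promised in Section~\ref{sect:prelim}, the inequality $\|v\|_p \le K^{1/p}\|v\|_2$ inflates the right-hand side by at most a factor of $\sqrt{K}$, which is exactly where the $K$ in the target rate appears.

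It remains to control $n_z$. Since each of the $N$ independent inputs samples $M$ feature names from $p_Z$, the indicator $\mathbb{1}[z\in\mathcal{Z}^{(i)}]$ has expectation at most $Mp_Z(z)$. Treating $p_Z$ as approximately uniform over $\mathcal{Z}$, we get $\mathbb{E}[n_z] \lesssim NM/|\mathcal{Z}|$, and either a Markov or a Chernoff bound gives $n_z \le \tilde O(NM/|\mathcal{Z}|)$ with probability at least $1-\delta''$. Combining the two high-probability events with a union bound, setting $\delta'=\delta''=\delta/2$, and absorbing constants and logarithmic factors into $\tilde O$ recovers the claimed rate $\tilde O(\sqrt{KN^{-1}M|\mathcal{Z}|^{-1}\delta^{-1}})$.

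The main obstacle is notational rather than conceptual: the summands $\xi^{(i)}$ are non-identically distributed, so off-the-shelf multinomial concentration statements cannot be quoted verbatim. Conditioning on the inputs sidesteps this by simultaneously restoring independence and freezing the random index set $\mathcal{I}_z$, which is what makes the second-moment bound clean. A secondary concern is making the $\sqrt{K}$ factor drop out for the right norm; the bound is tightest in $\ell_2$ and loses $\sqrt{K}$ only when passing to $\ell_1$ or $\ell_\infty$ via standard norm equivalences, which is consistent with the exponent in the $\tilde O$ expression.
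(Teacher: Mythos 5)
Your proposal is correct and follows essentially the same route as the paper: a second-moment (Chebyshev/Markov) bound on the average of independent, non-identically distributed centered categorical indicators, combined with the estimate $N_z \approx NM/|\mathcal{Z}|$; the paper merely packages the mean/covariance computation by citing the Poisson-multinomial result of \citet{Lin2022ThePM} and bounds the trace via $\sum_k p_Y^2(k|x^{(i)}) \ge 1/K$, whereas you compute $\mathbb{E}\|1(y)-p\|_2^2 \le 1$ directly after conditioning on the inputs. Your treatment is if anything slightly more careful (explicit conditioning to freeze $\mathcal{I}_z$ and restore independence, and a union bound over the event controlling $n_z$ rather than substituting its expectation), but the argument and the resulting rate are the same.
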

It is also feasible to achieve the above lemma by applying Lindeberg's central limit theorem, with a weak assumption that the variance of each label is finite. \chengyu{Not CLT, but LLN (Law of large numbers), because converging to expectation rather than normal distribution}

\vspace{2mm}
Next, we show that the average label distribution $\bar{p}(\cdot|z)$ approximates the true label distribution of the corresponding feature $z$.
\begin{lemma}[Approximation of the true label distribution of each feature]
\label{lemma:true-distribution-feature}
\begin{equation}
\left\|\bar{p}(\cdot|z) - p_Y(\cdot|z)\right\| \le \tilde{O}\left(M |\mathcal{Z}|^{-1}\right).
\end{equation}  
\end{lemma}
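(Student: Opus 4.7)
The plan is to leverage the geometric-mean form of $p_Y(\cdot|x)$ from Definition~\ref{definition:data-distribution} to isolate the target feature $z$, and then reduce the statement to a concentration argument on the auxiliary features. For any input $x^{(i)}$ containing $z$ at some position $m_i$,
\[
p_Y(\cdot|x^{(i)}) = p_Y(\cdot|z)^{1/M}\odot\Bigl(\textstyle\prod_{m\ne m_i} p_Y(\cdot|z_m^{(i)})\Bigr)^{1/M}.
\]
Summing over $\{i : z\in\mathcal Z^{(i)}\}$ pulls this $z$-dependent factor out of $\bar p(\cdot|z)$ and leaves an empirical mean of ``residual'' geometric products indexed by the remaining $M-1$ feature slots, which are (up to the conditioning on $z\in\mathcal Z^{(i)}$) i.i.d.\ from $p_Z$.

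I would then separately control two sources of fluctuation. First, the empirical frequency $|\{i:z\in\mathcal Z^{(i)}\}|/N$ concentrates around $\Pr[z\in\mathcal Z^{(1)}] = 1-(1-p_Z(z))^M$, which equals $M/|\mathcal Z|$ up to an $O((M/|\mathcal Z|)^2)$ correction when $p_Z$ is near uniform, with a $O(1/\sqrt N)$ Chernoff fluctuation. Second, conditional on the indicator, the residual entries $p_Y(y|z_m^{(i)})^{1/M}$ are bounded in $[0,1]$ and i.i.d., so Hoeffding on the $\Theta(NM/|\mathcal Z|)$ retained inputs yields concentration to $\mu:=(\mathbb E_{z'\sim p_Z}[p_Y(\cdot|z')^{1/M}])^{M-1}$ at rate $O(\sqrt{|\mathcal Z|/(NM)})$. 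Combined, these give with high probability
\[
\bar p(\cdot|z) \;=\; \tfrac{M}{|\mathcal Z|}\, p_Y(\cdot|z)^{1/M}\odot\mu \;+\; \tilde O(M/|\mathcal Z|).
\]

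The remaining step is to identify this population limit with $p_Y(\cdot|z)$. The geometric-mean label rule is calibrated so that the baseline $\mu$ (a $z$-independent constant coming from averaging the random auxiliary features), the $1/|\mathcal Z|$ frequency factor, and the fragment $p_Y(\cdot|z)^{1/M}$ reconstruct $p_Y(\cdot|z)$ at leading order. The residual mismatch has two sources: (i) the binomial/coupon-collector correction when $z$ occupies more than one patch in the same input, and (ii) the difference between sampling the auxiliary features from $p_Z$ versus $p_Z$ conditioned on $\ne z$. Both contributions are $O((M/|\mathcal Z|)^2)$ and, after a union bound over the $K$ coordinates to convert the coordinatewise bounds into a $p$-norm bound, fit inside the $\tilde O(M/|\mathcal Z|)$ budget.

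The hard part is this last reconciliation: making the arithmetic mean that defines $\bar p(\cdot|z)$ line up with the geometric-mean target $p_Y(\cdot|z)$ at the right normalization. It relies on the particular form of Definition~\ref{definition:data-distribution}—specifically, the $1/M$-power geometric average—so that the fragment $p_Y(\cdot|z)^{1/M}$ and the baseline $\mu$ exactly compensate the $M/|\mathcal Z|$ prefactor up to second order. Tracking the coupling between the indicator $\mathbf 1[z\in\mathcal Z^{(i)}]$ and the residual product, particularly in the repeated-feature case, is the combinatorial subtlety that has to be handled carefully to prevent the error from degrading from $\tilde O(M/|\mathcal Z|)$ to a square-root rate.
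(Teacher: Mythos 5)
Your decomposition of the geometric mean and your accounting of the $N_z/N\approx M/|\mathcal{Z}|$ frequency factor are fine, but the step you yourself flag as the hard part --- the claim that $\tfrac{M}{|\mathcal{Z}|}\,p_Y(\cdot|z)^{1/M}\odot\mu$ ``reconstructs'' $p_Y(\cdot|z)$ at leading order --- is a genuine gap, and it is not one that can be closed from Definition~\ref{definition:data-distribution} alone. The baseline $\mu=(\mathbb{E}_{z'\sim p_Z}[p_Y(\cdot|z')^{1/M}])^{M-1}$ is independent of $z$, so for the identity to hold for every $z$ you would need $p_Y(k|z)^{1/M}\mu_k\propto p_Y(k|z)$ in $k$, i.e.\ $\mu_k\propto p_Y(k|z)^{1-1/M}$ simultaneously for all $z$, which forces all features to share the same label distribution. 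The deeper issue is structural: under your own Hoeffding step the $M-1$ auxiliary features are sampled independently of $z$, and since each feature contributes only weight $1/M$ in log space, the population limit of the average over inputs containing $z$ is dominated by the auxiliary features and lands near a global average label distribution rather than near $p_Y(\cdot|z)$. No amount of care with the coupon-collector or conditioning corrections rescues this; the lemma is simply false for a generic assignment of label distributions to features when features are sampled independently.

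The paper closes exactly this hole with an additional assumption (the ``minor assumption on the sampling process'' alluded to in the main text): $D_{\mathrm{KL}}(p_Y(\cdot|z)\,\|\,p_Y(\cdot|z'))=\psi(I(z,z'))$ for a concave, monotonically decreasing $\psi$ of the pointwise mutual information, so that features which co-occur with $z$ necessarily carry label distributions close to $p_Y(\cdot|z)$. Given that, the argument is deterministic and short: the product form of $p_Y(\cdot|x^{(i)})$ makes $D_{\mathrm{KL}}(p_Y(\cdot|z)\|p_Y(\cdot|x^{(i)}))$ equal to the average of the pairwise KL divergences over $z'\in\mathcal{Z}^{(i)}$, Jensen's inequality bounds this by $\psi(\bar{I}(i,z))$, Pinsker converts KL to an $\ell_1$ bound per input, and the $M/|\mathcal{Z}|$ factor enters through $N_z/N$. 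Your proposal, by contrast, re-derives concentration that the paper defers to Lemma~\ref{lemma:sample-mean-feature} and omits the one assumption that actually makes the approximation true. To repair your route you would need to import that co-occurrence/similarity hypothesis (or an equivalent), at which point the Pinsker--Jensen argument is both shorter and avoids the calibration identity entirely.
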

The intuition is that since the average distributions $\bar{p}(\cdot|z)$ here are all contributed by the true label distribution of the inputs that contain feature $z$, it will be dominated by the label distribution of feature $z$, given some minor assumption on the sampling process of features when generating each input. 

Finally, combining Lemmas~\ref{lemma:prediction-feature}, \ref{lemma:sample-mean-feature}, \ref{lemma:true-distribution-feature}, one can see that the probabilistic prediction of each feature given by the empirical risk minimizer will approximate the true label distribution of that feature.
Subsequently, we can show that the probabilistic prediction of each input approximates the true label distribution of that input, which leads to the following main theorem.

\begin{theorem}[Approximation error under a hypothetical case]
\label{proposition:hypothetic}
Given the setup introduced in Section~\ref{sect:problem-setup}, let $f^*$ be a minimizer of the empirical risk~(Eq.~(\ref{eq:empirical-risk})) and assume $f^*_{E}$ is an invariant feature extractor. Then for any input $x \in \mathcal{D}$, with probability at least $1 - \delta$,
\begin{equation}
    \label{eq:bound-hypothetic}
    \|f^*(x) - p^*(x)\| \le \tilde{O}\left(\sqrt{K N^{-1} M |\mathcal{Z}|^{-1} \delta^{-1}}\right) + \tilde{O}\left(M|\mathcal{Z}|^{-1}\right) .
\end{equation}
\end{theorem}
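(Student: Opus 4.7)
The plan is to combine Lemmas 3.2, 3.3, and 3.4 via the triangle inequality to obtain a per-feature error bound, and then lift that bound to the full-input prediction by exploiting the geometric-mean structure shared by $f^*(x)$ and $p^*(x)$.

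First, chaining the three lemmas with the triangle inequality yields, for every feature $z \in \mathcal{Z}$ appearing in the training set,
$$\|p_{f^*}(x_z) - p_Y(\cdot|z)\| \le \tilde{O}\!\left(\sqrt{KN^{-1}M|\mathcal{Z}|^{-1}\delta^{-1}}\right) + \tilde{O}\!\left(M|\mathcal{Z}|^{-1}\right),$$
where the first term comes from Lemmas 3.2 and 3.3 (identifying $p_{f^*}(x_z)$ with $\bar{y}_z$ and then comparing to $\bar{p}(\cdot|z)$) and the second from Lemma 3.4. A union bound over the at most $|\mathcal{Z}|$ features, absorbed into $\tilde{O}$, upgrades this into a statement holding uniformly in $z$ with probability $1-\delta$.

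Next, I would lift this per-feature bound to a per-input bound. Unwrapping the modified softmax, one checks coordinate-wise that
$$\widetilde{\text{Softmax}}(w_C h)_k = \frac{\exp\!\left(\tfrac{1}{M}\sum_m (w_C h_m)_k\right)}{\prod_m \left(\sum_{k'} \exp((w_C h_m)_{k'})\right)^{1/M}} = \left(\prod_{m=1}^M p_{f^*}(x_{z_m})_k\right)^{1/M},$$
so $f^*(x)$ is precisely the coordinate-wise geometric mean of the per-patch predictions $p_{f^*}(x_{z_m})$. By Definition 3.1, $p^*(x)$ has the same form built from the per-feature true label distributions $p_Y(\cdot|z_m)$. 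It then remains to bound the distance between two geometric means whose $M$ corresponding factors are each close by the per-feature rate above. I would proceed coordinate-wise, combining the telescoping identity $\prod_m a_m - \prod_m b_m = \sum_m \bigl(\prod_{m'<m} a_{m'}\bigr)(a_m - b_m)\bigl(\prod_{m'>m} b_{m'}\bigr)$ with the mean-value estimate for $t \mapsto t^{1/M}$. Since all factors lie in $[0,1]$, this preserves the asymptotic order of the per-feature rate.

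The main obstacle is the Hölder degeneracy of $t \mapsto t^{1/M}$ near zero: if some coordinate of $p_Y(\cdot|z_m)$ is very small, the naive mean-value bound can blow up by a factor that depends badly on $M$. I would handle this either by imposing a mild nondegeneracy lower bound on the entries of $p_Y(\cdot|z)$, which is standard in this line of work, or, more cleanly, by working in log-space, where the KL divergence between $f^*(x)$ and $p^*(x)$ decomposes additively across patches under the geometric-mean structure; one then applies the per-feature bound there and converts back to the $p$-norm via Pinsker's inequality. Either route preserves the two-term rate and yields the claimed inequality.
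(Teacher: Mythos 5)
Your proposal matches the paper's proof in both structure and substance: the paper likewise chains Lemmas~\ref{lemma:prediction-feature}, \ref{lemma:sample-mean-feature}, and \ref{lemma:true-distribution-feature} into a per-feature bound and then lifts it to the input level by observing that $f^*(x)$ and $p^*(x)$ are both coordinate-wise geometric means over patches, passing to log-space so the error decomposes additively across features. The degeneracy near zero that you flag is exactly the delicate point in the paper's own argument, which it absorbs into an exponent $\beta = \min(\min_k f^*(x)(k), \min_k p_Y(k|x))$ --- effectively the nondegeneracy assumption you propose.
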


\subsection{Realistic Case}
\label{sect:realistic-case}
We now show that in a realistic case where the feature extractor is not exactly invariant, it is still possible to approximate the true label distribution, as long as the feature extractor is robust to the variation of features across inputs and also robust to feature transformations.

\begin{definition}[Lipschitz-continuous feature extractor]
We call $f_E$ a $L_X$-Lipschitz-continuous feature extractor, if for any two inputs $i$ and $j$, $\|f_E(x^{(i)}_m) - f_E(x^{(j)}_{m'})\| \le L_X \|x^{(i)}_{m} - x^{(j)}_{m'}\|  $, as long as $z_m = z_{m'}$, namely the feature names of the patches $m$ in $i$ and patch $m'$ in $j$ match.
\end{definition}

\begin{definition}[Transformation-robust feature extractor]
We call $f_E$ a $L_\Gamma$-transformation-robust feature extractor, if for any patch $m$ in any input $i$, and for any transformation $\gamma$, $\|f_E(\gamma(x^{(i)}_m)) - f_E(x^{(i)}_m)\| \le L_\Gamma$.
\end{definition}

Similar to Lemma~\ref{lemma:prediction-feature}, given a Lipschitz-continuous and transformation-robust feature extractor, the probabilistic predictions of features will still converge to the sample mean of the labels where the corresponding inputs contain this feature, up to some constant error. This requires the assumption that the input representation of the same feature is similar across different inputs, which is intuitive and covered by Definition~\ref{definition:data-distribution}. All other lemmas introduced in the hypothetical case still hold trivially as they are not dependent on the network. Therefore, we can have the following result.
\begin{theorem}[Approximation error under a realistic case]
\label{theorem:error}
Given the setup introduced in Section~\ref{sect:problem-setup}, let $f^*$ be a minimizer of the empirical risk~(Eq.~(\ref{eq:empirical-risk})) and assume $f^*_{E}$ is a $L_X$-Lipschitz-continuous and $L_\Gamma$-transformation-robust feature extractor. Let $\nu = \max_z \nu_z$. Then for any input $x \in \mathcal{D}$, with probability at least $1 - \delta$,
\begin{equation}
    \label{eq:bound-realistic}
    \|f^*(x) - p^*(x)\| \le \tilde{O}\left(\sqrt{K  N^{-1} M |\mathcal{Z}|^{-1} \delta^{-1}}\right) + \tilde{O}\left(M|\mathcal{Z}|^{-1}\right) +  \tilde{O}(L_X \tilde{\delta}^{-0.5} \nu) + \tilde{O}(L_\Gamma).
\end{equation}
\end{theorem}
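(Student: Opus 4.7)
The plan is to mirror the three-step argument of the hypothetical case (Lemmas~\ref{lemma:prediction-feature}--\ref{lemma:true-distribution-feature}) and only pay an extra cost wherever the invariance of $f^*_E$ was actually used. Since Lemmas~\ref{lemma:sample-mean-feature} and \ref{lemma:true-distribution-feature} are statements purely about the data distribution and do not involve the network, they transfer verbatim and already contribute the first two terms $\tilde{O}(\sqrt{KN^{-1}M|\mathcal{Z}|^{-1}\delta^{-1}})$ and $\tilde{O}(M|\mathcal{Z}|^{-1})$. The whole burden of the proof is therefore a robust version of Lemma~\ref{lemma:prediction-feature}: I need to show that, when $f^*_E$ is only $L_X$-Lipschitz and $L_\Gamma$-transformation-robust, the probabilistic prediction $p_{f^*}(x_z)$ at each feature patch still equals $\bar y_z$ up to an additive error of order $\tilde O(L_X\tilde\delta^{-0.5}\nu)+\tilde O(L_\Gamma)$.

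To carry this out, I would first introduce, for each feature name $z\in\mathcal Z$, a canonical representation $\bar h_z$ defined as the average of $f^*_E$ evaluated on all untransformed patches with feature name $z$ appearing in $\mathcal D$. For any particular patch $x^{(i)}_m$ with $z_m=z$ that actually appears in the training set, I would bound $\|f^*_E(\gamma(x^{(i)}_m))-\bar h_z\|$ by the triangle inequality: transformation robustness gives a $L_\Gamma$ term, and Lipschitz continuity combined with the finite per-feature variance $\nu_z\le\nu$ gives, via Chebyshev's inequality over the random input representation $x^{(i)}_m\sim p_X(\cdot|z)$, a bound of order $L_X\nu/\sqrt{\tilde\delta}$ with probability $1-\tilde\delta$. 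Taking a union bound over all patches yields a uniformly near-invariant feature extractor, with deviation $\varepsilon:=\tilde O(L_X\tilde\delta^{-0.5}\nu)+\tilde O(L_\Gamma)$.

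With this uniform control in hand, I would then repeat the regrouping argument behind Lemma~\ref{lemma:prediction-feature}: because the modified Softmax head $f_C$ averages log-probabilities across patches, the cross-entropy empirical risk decomposes into a sum indexed by feature occurrences. If the feature extractor were exactly invariant, Gibbs' inequality would force $p_{f^*}(x_z)=\bar y_z$; here, because each patch's feature vector is at most $\varepsilon$-away from $\bar h_z$, I can bound the perturbation of the $1\!\times\!1$ convolutional logits by $\|w_C\|\varepsilon$, then push this through the Softmax using its $1$-Lipschitzness in the $\ell_\infty\!\to\!\ell_1$ sense, to conclude that the minimizer satisfies $\|p_{f^*}(x_z)-\bar y_z\|\le\tilde O(\varepsilon)$ for every $z$ appearing in the data. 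Combining this with Lemmas~\ref{lemma:sample-mean-feature} and \ref{lemma:true-distribution-feature} via the triangle inequality yields $\|p_{f^*}(x_z)-p_Y(\cdot|z)\|$ bounded by the sum of the four advertised terms.

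The final step is to lift the per-feature bound to a per-input bound. Because $f_C$ is built so that $f^*(x)$ is the (normalized) geometric mean $\prod_m p_{f^*}(x_{z_m})^{1/M}$ and $p^*(x)$ is the analogous geometric mean of $p_Y(\cdot|z_m)$ by Definition~\ref{definition:data-distribution}, the $p$-norm error on $f^*(x)-p^*(x)$ is controlled by the maximum per-feature error through a standard Lipschitz-of-geometric-mean estimate, which is dimension-free up to logarithmic factors absorbed into $\tilde O(\cdot)$. I expect the main obstacle to be the robust version of Lemma~\ref{lemma:prediction-feature}: naively, a non-invariant $f_E$ allows the minimizer to exploit patch-specific variations to overfit individual one-hot labels rather than collapse onto $\bar y_z$, so one must argue carefully that the additional degrees of freedom cost the minimizer only $O(\varepsilon)$ in prediction accuracy. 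Handling this rigorously likely requires a first-order optimality (stationarity) argument on the perturbed grouped objective, rather than a direct application of Gibbs' inequality as in the invariant case.
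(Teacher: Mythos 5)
Your proposal follows essentially the same route as the paper: Lemmas~\ref{lemma:sample-mean-feature} and \ref{lemma:true-distribution-feature} are reused unchanged, the feature-extractor deviation is bounded by the triangle inequality as $2L_\Gamma$ plus $L_X$ times a Chebyshev-controlled input deviation of order $\nu/\sqrt{\tilde\delta}$, and the per-feature bound is lifted to a per-input bound exactly as in Theorem~\ref{proposition:hypothetic}. The one step you flag as the main obstacle --- that mere closeness of same-feature predictions does not by itself force them onto $\bar y_z$ --- is resolved in the paper precisely by the first-order optimality argument you anticipate: ERM is recast as a constrained problem with $\|p_z^{(i)}-p_z^{(j)}\|\le L_z$ as an explicit constraint, and the KKT stationarity conditions yield $\|p_z^{(i)}-\bar y_z\|\le L_z$.
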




\section{\ours}
\label{sect:method}



Based on our theoretical findings, we investigate practical techniques for training the teacher model to more accurately approximate the true label distribution of the training data.



\smallsection{Lipschitz regularization}
Theorem~\ref{theorem:error} suggests that it is necessary to enforce the feature extractor to be Lipschitz continuous. This may also be achieved by current teacher training practice, as neural networks are often implicitly Lipschitz bounded~\cite{Bartlett2017SpectrallynormalizedMB}. Nevertheless, we observe that explicit Lipschitz regularization (LR) can still help in multiple experiments (See Sect.~\ref{sect:experiment}). Therefore, we propose to incorporate a global Lipschitz constraint into teacher's training. For the implementation details, we follow the existing practice of Lipschitz regularization~\cite{Yoshida2017SpectralNR, Miyato2018SpectralNF} and defer them to  Appendix~\ref{sect:implementation-detail}.

\smallsection{Consistency regularization}
Theorem~\ref{theorem:error} also shows that to learn the true label distribution, it is necessary to ensure the feature extractor is robust to feature transformations. This is aligned with the standard practice which employs data augmentation as regularization. However, when data is scarce, it is better to explicitly enforce the model to be robust to transformations.
This is known as consistency regularization (CR)~\cite{Laine2017TemporalEF, Xie2020UnsupervisedDA, Berthelot2019ReMixMatchSL, Sohn2020FixMatchSS} that is widely used in semi-supervised learning.

Considering the training efficiency of consistency regularization, we utilize temporal ensembling~\cite{Laine2017TemporalEF}, which penalizes the difference between the current prediction and the aggregation of previous predictions for each training input. In this way, the consistency under data augmentation is implicitly regularized since the augmentation is randomly sampled in each epoch. And it is also efficient as no extra model evaluation is required for an input. 

To scale the consistency loss, a loss weight is often adjusted based on a Gaussian ramp-up curve in previous works~\cite{Laine2017TemporalEF}. However, the specific parameterization of such a Gaussian curve varies greatly across different implementations, where more than one additional hyperparameters have to be set up and tuned heuristically. Here to avoid tedious hyperparameter tuning we simply linearly interpolate the weight from $0$ to its maximum value, namely $\lambda_{\text{CR}}(t) = \frac{t}{T} \lambda_{\text{CR}}^{\max}$,
where $T$ is the total number of epochs.

\smallsection{Summary}
To recap, our teacher training method introduces two additional regularization terms. The loss function can thus be defined as $\ell = \ell_{\text{Stand.}} + \lambda_{\text{LR}} \ell_{\text{LR}} + \lambda_{\text{CR}} \ell_{\text{CR}}$,
where $\ell_{\text{Stand.}}$ is the standard empirical risk defined in Eq.(\ref{eq:empirical-risk}) and $\lambda_{\text{LR}}$ is the weight for Lipschitz regularization.
Our method is simple to implement and incurs only minimal computation overhead (see Section~\ref{section:experiment-cifar100}).

\vspace{-0.5em}
\section{Experiments}
\label{sect:experiment}

In this section, we evaluate the effectiveness of our teacher training method in knowledge distillation. We focus on compressing a large network to a smaller one where the student is trained on the same data set as the teacher.


\vspace{-0.5em}
\subsection{Experiment setup}
\label{sect:setup}
We denote our method as student-oriented teacher training (\ours), since it aims to learn the true label distribution to improve student performance, rather than to maximize teacher performance. We compare our method with the standard practice for teacher training in knowledge distillation (\emph{Standard}) (\emph{i.e.}, Eq.~(\ref{eq:empirical-risk})). 
We conduct experiments on benchmark datasets including CIFAR-100~\cite{Krizhevsky2009LearningML}, Tiny-ImageNet~\cite{TinyImageNet}, and ImageNet~\cite{Deng2009ImageNetAL}.
%
We experiment with various backbone networks including ResNet~\cite{He2016DeepRL}, Wide ResNet~\cite{Zagoruyko2016WideRN} and ShuffleNet~\cite{Zhang2018ShuffleNetAE, Tan2019MnasNetPN}. We test the applicability of \ours from different aspects of model compression in knowledge distillation including reduction of the width or depth, and distillation between heterogeneous neural architectures. 
%

For knowledge distillation algorithms, we experiment with the original knowledge distillation method~({KD})~\cite{Hinton2015DistillingTK}, and a wide variety of other sophisticated knowledge distillation algorithms~(see Sect.~\ref{section:experiment-cifar100}). 
We report the classification accuracies on the test set of both teacher and the student distilled from it.  
All results except ImageNet are presented with mean and standard deviation based on $3$ independent runs. For Tiny-ImageNet, we also report the top-5 accuracy.
For hyperparameters, we set $\lambda_{\text{CR}}^{\max} = 1$ for both datasets, $\lambda_{\text{LR}} = 10^{-5}$ for CIFAR-100 and $\lambda_{\text{LR}} = 10^{-6}$ for Tiny-ImageNet/ImageNet.
More detailed hyperparameter settings for neural network training and knowledge distillation can be found in Appendix~\ref{appendix:setting}.

\begin{table*}[htbp]
    \caption{Test accuracy of the teacher and student with knowledge distillation conducted on CIFAR-100.
    \ours achieves better student accuracy than Standard for various architectures, depsite a lower teacher accuracy.
    }
    \label{table:experiment-cifar100}
    \centering
    \small
    \resizebox{\linewidth}{!}{
        \begin{tabular}{l cc cc cc}
        \toprule     
        & \multicolumn{2}{c}{WRN40-2/WRN40-1} 
        & \multicolumn{2}{c}{ WRN40-2/WRN16-2} 
        & \multicolumn{2}{c}{ResNet32x4/ShuffleNetV2}  \\
            \cmidrule(lr){2-3} \cmidrule(lr){4-5} \cmidrule(lr){6-7} 
        & \textbf{Student} & Teacher 
        & \textbf{Student} & Teacher 
        & \textbf{Student} & Teacher  \\
        \midrule

        Standard 
        &  $73.73\pm 0.13$ & $76.38\pm 0.13$
        &  $74.87\pm 0.45$ & $76.38\pm 0.13$
        &  $74.86\pm 0.18$ & $79.22\pm 0.03$
        \\
        \ours 
        &  $\bm{74.35}\pm 0.23$ & $74.95\pm 0.28$   
        &  $\bm{75.39}\pm 0.23$ & $74.95\pm 0.28$
        &  $\bm{77.24}\pm 0.09$ & $78.49\pm 0.09$
        \\
        
        \midrule
        
        $\quad$no-CR 
        &  $74.34\pm 0.11$ & $74.30\pm 0.12$ 
        & $75.20\pm 0.24$ & $74.30\pm 0.12$ 
        &  $76.52\pm 0.52$ & $77.73\pm 0.17$
        \\
        $\quad$no-LR 
        &  $73.81\pm 0.15$ & $76.71\pm 0.16$ 
        & $75.21\pm 0.13$ & $76.71\pm 0.16$ 
        &  $76.23\pm 0.18$ & $80.01\pm 0.18$
        \\
        
        \bottomrule
        \end{tabular}
    }
\end{table*}


\begin{table*}[htbp]
    \caption{Test accuracy of the teacher and student with knowledge distillation conducted on Tiny-ImageNet and ImageNet. The student network is ResNet18, while the teacher network is ResNet34 for Tiny-ImageNet and ResNet152 for ImageNet. Due to computation constraints, we are not able to perform ablation experiments on ImageNet.
    }
    \label{table:experiment-tiny-imagenet}
    \centering
    \small
    \resizebox{\linewidth}{!}{
        \begin{tabular}{l cc cc cc}
        \toprule
        & \multicolumn{4}{c}{Tiny-ImageNet} & \multicolumn{2}{c}{ImageNet} \\
         \cmidrule(lr){2-5} \cmidrule(lr){6-7} 
        & \textbf{Student} (Top-1) & \textbf{Student} (Top-5)
        & Teacher (Top-1) & Teacher (Top-5) 
        & \textbf{Student} (Top-1) & Teacher (Top-1) \\
           
        \midrule

        Standard 
        & $66.19\pm 0.17$ & $85.74\pm 0.21$
        & $64.94\pm 0.32$ &  $84.33\pm 0.40$
        & $71.30$ & $77.87$
        \\
        \ours 
        & $\bm{66.83}\pm 0.20$ &  $86.19\pm 0.22$  
        & $64.88\pm 0.48$ & $84.91\pm 0.41$
        & $71.45$ & $77.11$
        \\
        
        \midrule
        
        $\quad$no-CR 
        & $66.39\pm 0.27$ & $86.05\pm 0.17$ 
        & $64.36\pm 0.43$ & $84.10\pm 0.27$
        & - & - 
        \\
        $\quad$no-LR 
        & $66.48\pm 0.43$ & $\bm{86.20}\pm 0.40$ 
        & $64.26\pm 1.54$ & $84.48\pm 0.84$
        & - & -
        \\
        
        \bottomrule
        \end{tabular}
    }
    \vspace{-3mm}
\end{table*}

\subsection{Results}
\label{section:experiment-cifar100}

\smallsection{End-to-end Knowledge Distillation Performance}
Tables~\ref{table:experiment-cifar100} and \ref{table:experiment-tiny-imagenet}
show the evaluation results on CIFAR-100 and Tiny-ImageNet/ImageNet, respectively.
Our teacher training method \emph{\ours} can improve the student's test accuracy consistently across different datasets and teacher/student architecture pairs. 
Note that the success of our teacher training method is not due to the high accuracy of the teacher.
In Tables~\ref{table:experiment-cifar100} and \ref{table:experiment-tiny-imagenet}, one may already notice that our regularization method will hurt the accuracy of the teacher, despite that it can improve the accuracy of the student distilled from it. 

On ImageNet particularly, the gain in student's accuracy by \ours{} is minor, potentially because the large size of the dataset can already facilitate the learning of true label distribution as suggested by our theory. Nevertheless, one may observe that the teacher's accuracy is significantly lower when using \ours{}, which implicitly suggests our teacher regularization method is tailored towards the student's accuracy.

\begin{table*}[htbp]
    \caption{Estimation of the uncertainty quality of the teacher network trained by Standard and \ours. The uncertainty quality is estimated by the ECE and NLL both before and after temperature scaling (TS).
    }
    \label{table:experiment-uncertainty}
    \centering
    \small
    \scalebox{0.9}{
        \begin{tabular}{ccc cc cc}
        \toprule    
        Dataset & Teacher & Method & ECE & NLL & ECE (w/ TS) & NLL (w/ TS) \\
        \midrule
        \multirow{2}{*}{CIFAR-100} & \multirow{2}{*}{WRN40-2} & Standard & $0.113\pm0.003$  & $1.047\pm0.007$  &  $0.028\pm0.004$  & $0.905\pm0.008$  \\
        & & \ours & $0.057\pm0.002$  & $0.911\pm0.013$  & $0.016\pm0.003$  & $0.876\pm0.012$  \\
        \midrule
        \multirow{2}{*}{CIFAR-100} & \multirow{2}{*}{ResNet32x4} & Standard & $0.083\pm0.003$  & $0.871\pm0.010$  & $0.036\pm0.001$  &  $0.815\pm0.008$ \\
        & & \ours & $0.037\pm0.001$  & $0.777\pm0.006$  & $0.021\pm0.001$  & $0.764\pm0.003$ \\
        \midrule
        \multirow{2}{*}{Tiny-ImageNet} & \multirow{2}{*}{ResNet34} & Standard & $0.107\pm0.007$  & $1.601\pm0.037$  &  $0.043\pm0.002$  & $1.574\pm0.015$  \\
        & & \ours & $0.070\pm0.007$  & $1.496\pm0.031$  & $0.028\pm0.002$  &  $1.505\pm0.033$  \\ 
        \bottomrule
        \end{tabular}
    }
\end{table*}

\smallsection{Ablation Study}
We toggle off the Lipschitz regularization (LR) or consistency regularization (CR) in \ours (denoted as \emph{no-LR} and \emph{no-CR}, respectively) to explore their individual effects. 
As shown in Tables~\ref{table:experiment-cifar100} and  \ref{table:experiment-tiny-imagenet}, LR and CR can both improve the performance individually.
But on average, \ours achieves the best performance when combining both LR and CR, as also demonstrated in our theoretical analyses. 
Note that in Table~\ref{table:experiment-tiny-imagenet}, using Lipschitz regularization is not particularly effective because the regularization weight might not be properly tuned (see Figure~\ref{fig: hyperparameter-analysis}).  

\smallsection{Quality of True Distribution Approximation}
To further interpret the success of our teacher training method, we show that our regularization can indeed improve the approximation of the true label distribution thus benefiting the student generalization. Directly measuring the quality of the true distribution approximation is infeasible as the true distribution is unknown for realistic datasets. Follow previous works~\cite{Menon2021ASP}, we instead \emph{estimate} the approximation quality by reporting the Expected Calibration Error (ECE)~\cite{Guo2017OnCO} and NLL loss of the teacher on a holdout set with one-hot labels. Since scaling the teacher predictions in knowledge distillation can improve the uncertainty quality~\cite{Menon2021ASP}, we also report ECE and NLL after temperature scaling, where the optimal temperature is located on an additional holdout set~\cite{Guo2017OnCO}. As shown in Table~\ref{table:experiment-uncertainty}, our teacher training method can consistently improve the approximation quality for different datasets and teacher architectures.

\smallsection{Effect of Hyperparameters}
We conduct additional experiments on Tiny-ImageNet as an example to study the effect of two regularization terms introduced by our teacher training method. For Lipschitz regularization, we modulate the regularization weight $\lambda_{\text{LR}}$. For consistency regularization, we try different maximum regularization weights $\lambda_{\text{CR}}^{\max}$ and different weight schedulers including linear, cosine, cyclic, and piecewise curves. Detailed descriptions of these schedulers can be found in Appendix~\ref{appendix:setting}. As shown in Figure~\ref{fig: hyperparameter-analysis}, both Lipschitz and consistency regularizations can benefit the teacher training in terms of the student generalization consistently for different hyperparameter settings. This demonstrates that our regularizations are not sensitive to hyperparameter selection. 
Note that the hyperparameter chosen to report the results in Tables~\ref{table:experiment-cifar100} and \ref{table:experiment-tiny-imagenet} might not be optimal since we didn't perform extensive hyperparameter search in fear of overfitting small datasets. It is thus possible to further boost the performance by careful hyperparameter tuning.

In particular, Figure~\ref{fig: hyperparameter-analysis}(a) shows that, as Lipschitz regularization becomes stronger, the teacher accuracy constantly decreases while the student accuracy increases and converges. This demonstrates that excessively strong Lipschitz regularization hurts the performance of neural network training, but it can help student generalization in the knowledge distillation context.
\maybe{Also show local Lipschitz and consistency score to verify the regularization indeed work as expected?}

\begin{figure*}[htbp]
    \centering
    \begin{subfigure}[t]{0.3\linewidth}
        \centering
        \includegraphics[width=1.0\textwidth]{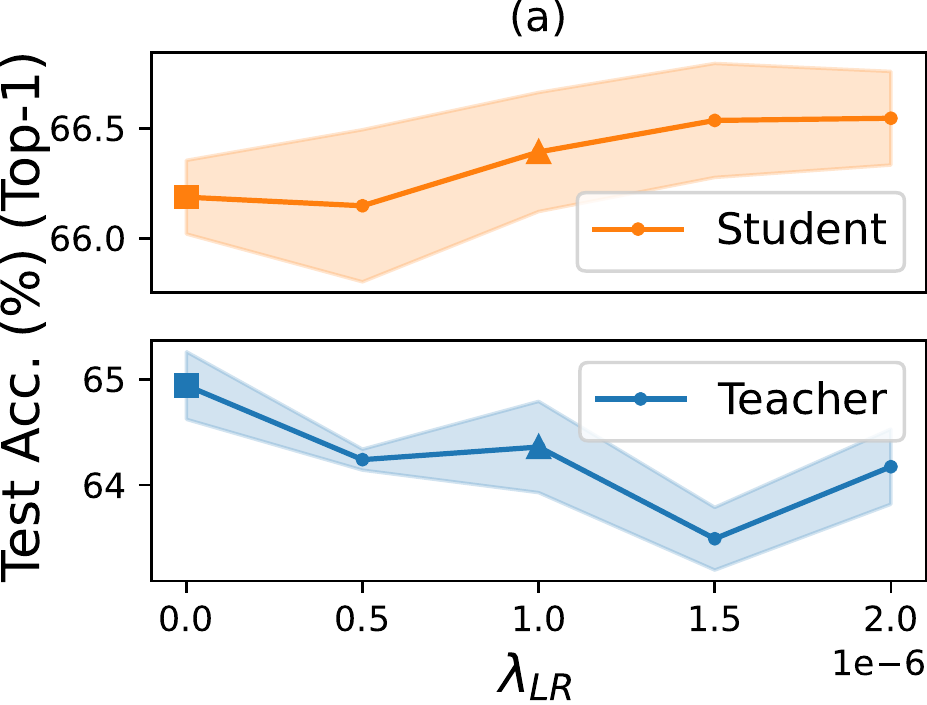}
        \vspace{-4mm}
        \label{}
    \end{subfigure}
    \hfill
    \begin{subfigure}[t]{0.3\linewidth}
        \centering
        \includegraphics[width=1.0\textwidth]{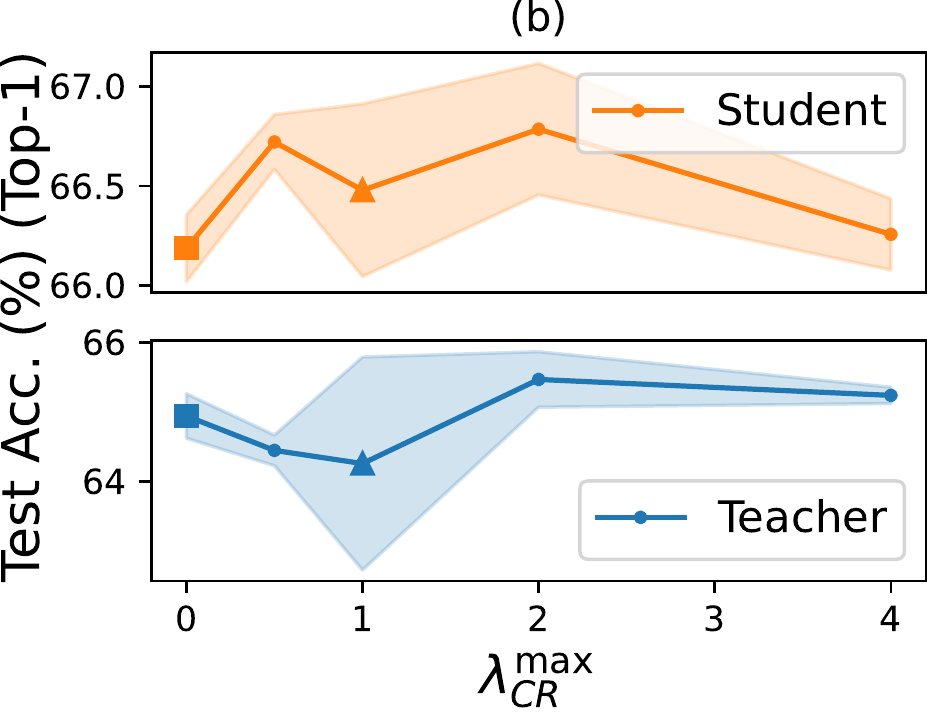}
        \vspace{-4mm}
        \label{}
    \end{subfigure}
    \hfill
    \begin{subfigure}[t]{0.3\linewidth}
        \centering
        \includegraphics[width=1.0\textwidth]{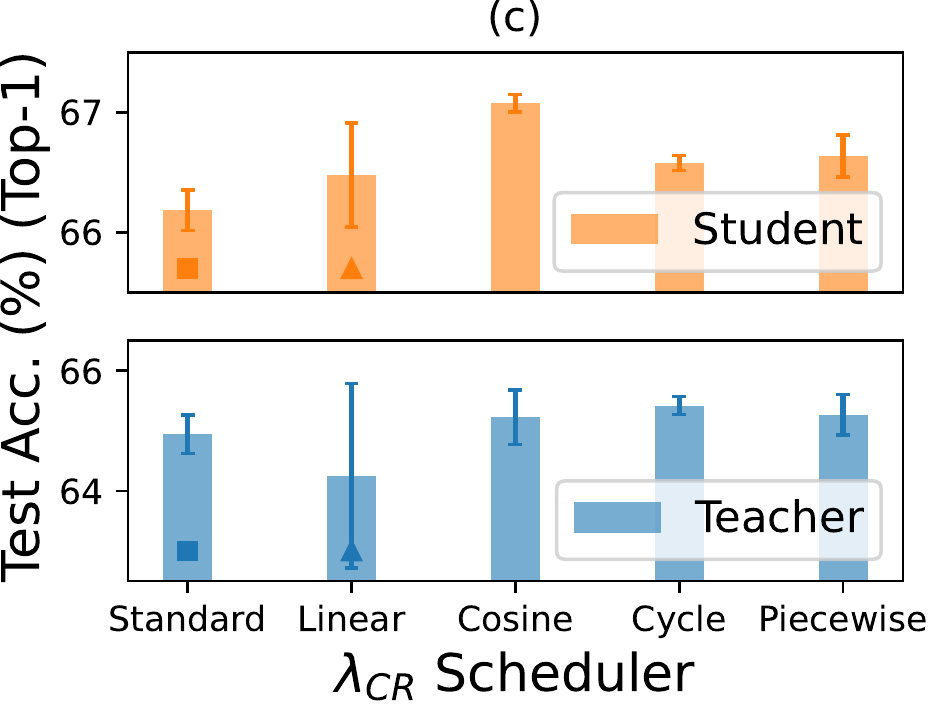}
        \vspace{-4mm}
        \label{}
    \end{subfigure}
    \vspace{-1mm}
    \caption{
    Effect of varying the hyperparameters in our teacher training method, including the weight for Lipschitz regularization $\lambda_{\text{LR}}$, the weight for consistency regularization $\lambda_{\text{CR}}$, and its scheduler. The settings of our method used to report the results (e.g. Table~\ref{table:experiment-tiny-imagenet}) are denoted as ``$\blacktriangle$''. The standard teacher training practice is denoted as ``$\blacksquare$'' for comparison.
    }
    \label{fig: hyperparameter-analysis}
\end{figure*}

\smallsection{Other Knowledge Distillation Algorithms}
Besides the original knowledge distillation algorithm, we experiment with various feature distillation algorithms including {FitNets}~\cite{Romero2015FitNetsHF}, {AT}~\cite{Zagoruyko2017PayingMA}, {SP}~\cite{Tung2019SimilarityPreservingKD},
{CC}~\cite{Peng2019CorrelationCF},
{VID}~\cite{Ahn2019VariationalID}, {RKD}~\cite{Park2019RelationalKD}, {PKT}~\cite{Passalis2018LearningDR}, {AB}~\cite{Heo2019KnowledgeTV}, {FT}~\cite{Kim2018ParaphrasingCN},
{NST}~\cite{Huang2017LikeWY},
{CRD}~\cite{Tian2020ContrastiveRD} and {SSKD}~\cite{Xu2020KnowledgeDM}. 
For the implementation of these algorithms, we refer to existing repositories for knowledge distillation~\cite{Tian2020ContrastiveRD, shah2020kdlib, matsubara2021torchdistill} and author-provided codes.
Although these distillation algorithms match all types of features instead of predictions between teacher and student, they will achieve the best distillation performance when combined with the prediction distillation (\emph{i.e.} original KD). 
Therefore, our teacher training method should still benefit the effectiveness of these distillation algorithms. 
We also experiment with a curriculum distillation algorithm RCO~\cite{Jin2019KnowledgeDV} which distills from multiple checkpoints in teacher's training trajectory. Our teacher training method should also benefit RCO as the later teacher checkpoints become more student-oriented.
As shown in Table~\ref{table:experiment-distillation-algorithms}, our \ours can boost the distillation performance of almost all these distillation algorithms, demonstrating its wide applicability.

\smallsection{Student fidelity}
Recent works have underlined the importance of student fidelity in knowledge distillation, namely the ability of the student to match teacher predictions~\cite{stanton2021does}. Student fidelity can be viewed as a measure of knowledge distillation effectiveness that is orthogonal to student generalization, as the student is often unable to match the teacher predictions although its accuracy on unseen data improves~\cite{Furlanello2018BornAN, mobahi2020self}. 
Here we measure the student fidelity by the average agreement between the student and teacher's top-$1$ predicted labels on the test set. As shown in Table~\ref{table:experiment-fidelity}, our teacher training method can consistently and significantly improve the student fidelity for different datasets and teacher-student pairs, which aligns with the improvement of student generalization shown by Table~\ref{table:experiment-cifar100} and  \ref{table:experiment-tiny-imagenet}. This demonstrates that the teacher can better transfer its ``knowledge'' to the student with our training method.

\begin{table*}[htbp]
    \caption{
    Average agreement (\%) between the student and teacher's top-$1$ predictions on the test set. 
    }
    
    \label{table:experiment-fidelity}
    \centering
    \small
    \scalebox{0.9}{
        \begin{tabular}{lcccc}
        \toprule
        & \multicolumn{3}{c}{CIFAR-100} & Tiny-ImageNet \\
        \cmidrule(lr){2-4} \cmidrule(lr){5-5}
         & WRN-40-2/WRN-40-1 & WRN-40-2/WRN-16-2 & ResNet32x4/ShuffleNetV2 & ResNet34/ResNet18 \\
        \midrule
        Standard & $76.16\pm0.14$  & $76.92\pm0.29$  & $76.63\pm0.25$  & $71.33\pm0.07$ \\
        \ours & $\mathbf{77.92}\pm0.27$  & $\mathbf{79.41}\pm0.11$  & $\mathbf{80.36}\pm0.13$  & $\mathbf{73.36}\pm0.25$ \\
        \bottomrule
        \end{tabular}
    }
\end{table*}

\section{Related Work}



\smallsection{Understand knowledge distillation}
\maybe{If possible, discuss more pioneering work on understanding KD, including Lampert, Lopez, etc.}
There exist multiple perspectives in understanding the effectiveness of knowledge distillation. Besides the statistical perspective which views the soft prediction of the teacher as an approximation of the true label distribution~\citep{Menon2021ASP, Dao2021KnowledgeDA}, another line of work understands knowledge distillation from a regularization perspective, which views the teacher's soft predictions as instance-wise label smoothing~\citep{Yuan2020RevisitingKD, Zhang2020SelfDistillationAI, Tang2020UnderstandingAI}.
More recently, \citep{AllenZhu2020TowardsUE} understands knowledge distillation from a feature learning perspective by focusing on the data that possesses a ``multi-view'' structure, that multiple features co-exist in the input and can be used to make the correct classification. Knowledge distillation is effective because the teacher can learn different features and transfer them to the student. Our theory is built on a similar assumption on the data structure, albeit we require the teacher to learn the true label distribution of the feature. Our theory thus can also be viewed as a bridge between the statistical perspective and feature learning perspective.

\smallsection{Alleviate ``teacher overfitting''}
Since in knowledge distillation, the distillation data is often reused from the teacher's training data, a teacher trained toward convergence is very likely to overfit its soft predictions on the distillation data. 
Intuitively, it is possible to tackle this problem by early stopping the teacher training~\cite{Cho2019OnTE}. 
However, a meticulous hyperparameter search may be required since the epoch number to find the best checkpoint is often sensitive to the specific training setting such as the learning rate schedule. 
It is also possible to save multiple early teacher checkpoints for the student to be distilled from sequentially~\cite{Jin2019KnowledgeDV}. 
Additionally, one can utilize a ``cross-fitting'' procedure to prevent the teacher from memorizing the training data. Namely, the training data is first partitioned into several folds, where the teacher predictions on each fold are generated by the teacher trained only on out-of-fold data~\cite{Dao2021KnowledgeDA}. 
One can also train the teacher network jointly with student's network blocks, which imposes a regularization toward the student performance~\cite{Park2021LearningST}. 
Different from these attempts, we train the teacher to directly learn the true label distribution of its training data, leading to a simple and practical student-oriented teacher training framework 
with minimum computation overhead. 

\section{Conclusion and Future Work}
\label{sect:conclusion}
In this work, we rigorously studied the feasibility to learn the true label distribution of the training data under a standard empirical risk minimization framework. We also explore possible improvements of current teacher training that facilitate such learning.
In the future, we plan to adapt our theory to other knowledge distillation scenarios such as transfer learning and mixture of experts, and explore more effective student-oriented teacher network training methods.


\newpage
\bibliographystyle{icml2023}
\bibliography{icml2023}



\clearpage
\appendix
\section{Proof}
\label{sect:proof}

\todo{We can consider optimization on the population loss at first. So we don't need all these convergence bounds. Follow Wei et al.}

\subsection{Modified Softmax}
We provide more details for the modified Softmax we defined. Recall that
\begin{definition}[Modified Softmax]
\begin{equation}
\widetilde{\text{Softmax}}(\hat{h}) = \frac{\exp(\frac{1}{M}\sum_m \hat{h}_m)}{(\prod_m \sum_k \exp(\hat{h}_{m, k}))^{1/M}}.
\end{equation}
\end{definition}
Our modified Softmax layer maps $\hat{h} \in \mathbb{R}^{M\times K}$ to the probabilistic output $f(x) \in \mathbb{R}^K$. It can be viewed as a combination of an average pooling layer and a Softmax layer, where the denominator of the softmax layer is a geometric average of the sum-exp of the probabilistic output of each batch $\hat{h}_m$.

\chengyu{
\begin{definition}[Softmax]
\begin{equation}
\text{Softmax}(\hat{h}) \coloneqq \frac{\exp(\hat{h})}{\sum_k \exp (\hat{h}_k)} = \frac{\exp(\frac{1}{M}\sum_m \hat{h}_m)}{ \sum_k \exp(\frac{1}{M}\sum_m\hat{h}_{m, k})} = \frac{\exp(\frac{1}{M}\sum_m \hat{h}_m)}{ \sum_k (\prod_m \exp(\hat{h}_{m, k}))^{1/M}}
\end{equation}
\end{definition}
}

\chengyu{
\color{blue}{
\subsection{Replace final pooling layer with a self-attention to model the cross-patch relation}
Previously, the weight matrix is simply
$$
(w_{m'm}) = 
\begin{bmatrix}
1 & 0 & \cdots & 0 \\
0 & 1 & \cdots & 0 \\
0 &  0 & \cdots & 0  \\
0 & 0 & \cdots & 1 \\
\end{bmatrix}.
$$
Now it is,
$$
(w_{m'm}^{(i)}) = 
\begin{bmatrix}
\text{Sim}(h_1, h_1) & \text{Sim}(h_1, h_2) & \cdots & \text{Sim}(h_1, h_M) \\
\text{Sim}(h_2, h_1) & \text{Sim}(h_2, h_2) & \cdots & \text{Sim}(h_2, h_M) \\
 &  & \cdots & \\
\text{Sim}(h_M, h_1) & \text{Sim}(h_M, h_2) & \cdots & \text{Sim}(h_M, h_M) \\
\end{bmatrix},
$$
which is dependent on each input $(i)$.
Note that real self-attention is not symmetric ($Q(x)^T K(x)$), which makes more sense as the order of the sequence matters here (e.g., the meaningful of A-B is not the same as B-A).
\begin{equation}
\begin{aligned}
& -\frac{1}{N} \sum_i 1(y^{(i)}) \cdot \log f(x^{(i)})\\
= & -\frac{1}{N} \sum_i 1(y^{(i)}) \cdot \log \widetilde{\text{Att-Softmax}}(w_C f_E(x^{(i)}))\\
\coloneqq & - \frac{1}{N} \sum_i 1(y^{(i)}) \cdot \log \widetilde{\text{Att-Softmax}}\left( \hat{h}^{(i)}\right)\\
= & - \frac{1}{N} \sum_i 1(y^{(i)}) \cdot \log \frac{\exp\left(\frac{\sum_{m'}\sum_{m} w_{m',m} \hat{h}_m^{(i)}}{\sum_{m'}\sum_{m} w_{m',m}}\right)}{?}\\
\coloneqq ? & - \frac{1}{N} \sum_i 1(y^{(i)}) \cdot \frac{\sum_{m'} \sum_{m} w_{m',m} \log p_m^{(i)}}{\sum_{m'} \sum_{m} w_{m',m}}, \\
= & - \frac{1}{N} \sum_i 1(y^{(i)}) \cdot  \frac{\sum_{z'\in \mathcal{Z}^{i}} \sum_{z\in \mathcal{Z}^{i}} w_{z',z} \log  p_z}{\sum_{z'\in \mathcal{Z}^{i}} \sum_{z\in \mathcal{Z}^{i}} w_{z',z}} \\
= & - \frac{1}{N} \sum_i 1(y^{(i)}) \cdot  \frac{ \sum_{z\in \mathcal{Z}^{i}} (\sum_{z'\in \mathcal{Z}^{i}} w_{z',z}) \log  p_z}{\sum_{z'\in \mathcal{Z}^{i}} \sum_{z\in \mathcal{Z}^{i}} w_{z',z}} \\
= & -  \sum_{z\in\mathcal{Z}} \left(\frac{1}{N} \sum_{\{i| z \in \mathcal{Z}^{(i)} \} } \frac{\sum_{z'\in \mathcal{Z}^{(i)}} w_{z',z}}{\sum_{z'\in \mathcal{Z}^{i}} \sum_{z\in \mathcal{Z}^{i}} w_{z',z}} 1(y^{(i)})\right) \cdot \log p_z, \\
= & -  \sum_{z\in\mathcal{Z}} \left(\frac{1}{N} \sum_{\{i| z \in \mathcal{Z}^{(i)} \} } \eta_z^{(i)} 1(y^{(i)})\right) \cdot \log p_z,
\end{aligned}
\end{equation}
where
$$
\eta_z^{(i)} = \frac{\sum_{z'\in \mathcal{Z}^{(i)}} w_{z',z}}{\sum_{z'\in \mathcal{Z}^{i}} \sum_{z\in \mathcal{Z}^{i}} w_{z',z}},
$$
is the contribution of similarities of feature $z$ in example $i$.
This coefficient automatically highlights the most important features in an image (mutually highly correlated), while suppressing the less important features (uncorrelated to other features, likely noise).
}
}

\chengyu{
\color{blue}{
\subsection{Or replace the pooling layer with a learnable fully connected layer to model the cross-patch relation}
$$
(w_{m'm}) = 
\begin{bmatrix}
w_{1,1} & w_{1,2} & \cdots & w_{1,M} \\
w_{2,1} & w_{2,2} & \cdots & w_{2,M} \\
w_{3,1} & w_{3,2} & \cdots & w_{3,M}  \\
w_{M,1} & w_{M,2} & \cdots & w_{M,M} \\
\end{bmatrix},
$$
which is determined by the position of the patch, whereas not dependent on each input.
\\
Then because the learnable weights are only of the size $M\times M\ll |\mathcal{Z}|$, it should converge to something that can still have the true label distribution. 
\todo{We can use alternative optimization, first solve $p_z$, then solve the weights in the form of $w\log w$ under the constraint.}
\begin{equation}
\begin{aligned}
& -\frac{1}{N} \sum_i 1(y^{(i)}) \cdot \log f(x^{(i)})\\
= & -\frac{1}{N} \sum_i 1(y^{(i)}) \cdot \log \widetilde{\text{Att-Softmax}}(w_C f_E(x^{(i)}))\\
\coloneqq & - \frac{1}{N} \sum_i 1(y^{(i)}) \cdot \log \widetilde{\text{Att-Softmax}}\left( \hat{h}^{(i)}\right)\\
= & - \frac{1}{N} \sum_i 1(y^{(i)}) \cdot \log \frac{\exp\left(\frac{\sum_{m'}\sum_{m} w_{m',m} \hat{h}_m^{(i)}}{\sum_{m'}\sum_{m} w_{m',m}}\right)}{?}\\
\coloneqq ? & - \frac{1}{N} \sum_i 1(y^{(i)}) \cdot \frac{\sum_{m'} \sum_{m} w_{m',m} \log p_m^{(i)}}{\sum_{m'} \sum_{m} w_{m',m}}, \\
= & \cdots \\
= & -  \sum_{z\in\mathcal{Z}} \left(
\frac{1}{N} \sum_{\{i| z \in \mathcal{Z}^{(i)} \} }
\frac{\sum_{z'\in \mathcal{Z}^{(i)}} w_{m'(z', i),m'(z, i)}}{\sum_{z'\in \mathcal{Z}^{i}} \sum_{z\in \mathcal{Z}^{i}} w_{m'(z', i),m'(z, i)}}
1(y^{(i)})
\right) \cdot \log p_z, \\
= & -  \sum_{z\in\mathcal{Z}} \left(\frac{1}{N} \sum_{\{i| z \in \mathcal{Z}^{(i)} \} } \eta_z^{(i)} 1(y^{(i)})\right) \cdot \log p_z,
\end{aligned}
\end{equation}
assuming a feature $z$ is unique in the input $i$ (doesn't appear twice), otherwise $m(z, i)$~(the patch number of feature $z$ in the input $i$) cannot be defined.
\\
Let
$$
\eta_z^{(i)} = \frac{\sum_{z'\in \mathcal{Z}^{(i)}} w_{m'(z', i),m'(z, i)}}{\sum_{z'\in \mathcal{Z}^{i}} \sum_{z\in \mathcal{Z}^{i}} w_{m'(z', i),m'(z, i)}}.
$$
First perform the optimization of $p_z$, then perform the optimization of $w_(m',m)$, the problem becomes
$$
\arg\min_{(w_{m', m})} -  \sum_{z\in\mathcal{Z}}
\left(\frac{1}{N} \sum_{\{i| z \in \mathcal{Z}^{(i)} \} } \eta_z^{(i)} 1(y^{(i)})\right) \cdot \log 
\left(\frac{1}{N} \sum_{\{i| z \in \mathcal{Z}^{(i)} \} } \eta_z^{(i)} 1(y^{(i)})\right) ?
$$
}}

\subsection{Lemma~\ref{lemma:prediction-feature}}
\label{sect:proof:lemma-prediction-feature}

\begin{proof}
Given the labeled dataset $\mathcal{D} = \{(x^{(i)}, y^{(i)})\}$, the empirical risk of the network function $f$ under cross-entropy loss can be written as
\begin{equation}
\begin{aligned}
& -\frac{1}{N} \sum_i 1(y^{(i)}) \cdot \log f(x^{(i)})\\
= & -\frac{1}{N} \sum_i 1(y^{(i)}) \cdot \log \widetilde{\text{Softmax}}(w_C f_E(x^{(i)}))\\
\coloneqq & - \frac{1}{N} \sum_i 1(y^{(i)}) \cdot \log \widetilde{\text{Softmax}}\left( \hat{h}^{(i)}\right)\\
= & - \frac{1}{N} \sum_i 1(y^{(i)}) \cdot \log \frac{\exp(\frac{1}{M}\sum_m \hat{h}_m^{(i)})}{(\prod_m \sum_k \exp(\hat{h}_{m, k}^{(i)}))^{1/M}}\\
= & - \frac{1}{N} \sum_i 1(y^{(i)}) \cdot \log \left( \frac{\prod_m \exp \hat{h}_m^{(i)})}{\prod_m \sum_k \exp(\hat{h}_{m, k}^{(i)})}\right)^{1/M}\\
= & - \frac{1}{MN} \sum_i 1(y^{(i)}) \cdot \log \frac{\prod_m \exp(\hat{h}_m^{(i)})}{\prod_m \sum_k \exp(\hat{h}_{m, k}^{(i)})} \\
= & - \frac{1}{MN} \sum_i 1(y^{(i)}) \cdot  \sum_m \log \frac{ \exp(\hat{h}_m^{(i)})}{\sum_k \exp(\hat{h}_{m, k}^{(i)})} \\
= & - \frac{1}{M N} \sum_i 1(y^{(i)}) \cdot \sum_{m} \log \text{Softmax}(\hat{h}_m^{(i)}) \\
\coloneqq & - \frac{1}{M N} \sum_i 1(y^{(i)}) \cdot \sum_{m} \log p_m^{(i)},
\end{aligned}
\end{equation}
where we have defined $p_m^{(i)} = \text{Softmax}(\hat{h}_m^{(i)}) \equiv \text{Softmax}(w_c f_E(x_m^{(i)}))$. Here $x_m^{(i)}$ indicates the $m$-th patch of the input $x$.

Now recall our assumption that $f_E$ is an invariant feature extractor, which means that for any $i$ and $j$, any $m$ and $m'$, and any $\gamma$ and $\gamma'$, $f_E(\gamma(x_m^{(i)})) = f_E(\gamma'(x_m^{(j)}))$ as long as $z_m = z_{m'}$. Since, $p_m^{(i)} = \text{Softmax}(\hat{h}_m^{(i)}) = \text{Softmax}(w_C f_E(\gamma(x_m^{(i)})))$, this means that $p_m^{(i)} = p_{m'}^{(j)}$ as long as $z_m = z_{m'}$.

Therefore, the empirical risk can be regrouped as
\[
\begin{aligned}
& - \frac{1}{M N} \sum_i 1(y^{(i)}) \cdot \sum_{m} \log  p_{m}^{(i)} \\
= & - \frac{1}{M N} \sum_i 1(y^{(i)}) \cdot \sum_{z\in \mathcal{Z}^{i}} \log  p_z \\
= & -  \frac{1}{M} \sum_{z\in\mathcal{Z}} \left(\frac{1}{N} \sum_{\{i| z \in \mathcal{Z}^{(i)} \} } 1(y^{(i)})\right) \cdot \log p_z,
\end{aligned}
\]
where we have let $p_z \coloneqq p_m^{(i)}$ where $z_m = z$.

Now we use the KKT condition to approach the empirical risk minimization problem since the only variable is $p_z$. The first-order stationarity condition gives
\begin{equation}
    \nabla_{p_z} \cdot = -\frac{1}{MN}\sum_{i} 1(y^{(i)}) \odot \frac{1}{p_z} + \lambda \bm{1} = 0,
\end{equation}
where $\odot$ indicates the Hadamard product. Now Hadamard product both sides by $p_z$ we have
\[
    -\frac{1}{MN}\sum_{i} 1(y^{(i)})  + \lambda  p_z = 0,
\]
and thus
\[
    p_z = \frac{1}{MN \lambda }\sum_{i} 1(y^{(i)}).
\]

Now consider the condition $\bm{1}\cdot p_z = 1$, we have
\[
\frac{1}{MN \lambda }\sum_{i} 1(y^{(i)}) \cdot \bm{1} = 1.
\]
Since $\sum_{i} 1(y^{(i)}) \cdot \bm{1} = |\{i | z\in \mathcal{Z}^{(i)}\}|$,
\[
\lambda = \frac{|\{i | z\in \mathcal{Z}^{(i)}\}|}{MN}.
\]
This leads to
\[
p_z = \frac{1}{N_z} \sum_{\{i | z\in \mathcal{Z}^{(i)}\}} 1(y^{(i)}),
\]
where $N_z \coloneqq |\{i | z\in \mathcal{Z}^{(i)}\}|$.


\end{proof}

\subsection{Lemma~\ref{lemma:sample-mean-feature}}
\begin{proof}
The sum of independently but non-identically distributed multinomial random variables is known as the Possion multinomial distribution (PMD). We utilize a known result of PMD to prove this lemma.
\begin{proposition}[\citet{Lin2022ThePM}]
Let $\bm{I}_i = (I_{i1, \cdots, I_{im}}), i=1,\cdots, n$ be $n$ independent random indicators where $I_{ij} \in \{0, 1\}$ and $\sum_{j=1}^m {I_{ij}}=1$ for each $i$. Let $\bm{p}_i =  (p_{i1}, \cdots, p_{im})$ be the probability vector that $\bm{I}_i$ is sampled from, where $\sum_{j=1}^m p_{ij} = 1$. Let $\bm{X}$ be the sum of these $n$ random indicators, namely $\bm{X} = (X_1, \cdots, X_m) = \sum_{i=1}^n \bm{I}_i$. Then we have
\begin{equation}
    \mathbb{E}[\bm{X}] = (p_{\cdot 1}, \cdots, p_{\cdot m}),
\end{equation}
where $p_{\cdot j} =\sum_{i=1}^n p_{ij} $.

And 
\begin{equation}
    \Sigma_{jk} =  
    \begin{cases}
    \sum_{i=1}^n p_{ij}(1-p_{ij}), ~\text{if}~j=k,\\
    -\sum_{i=1}^n p_{ij}p_{ik}, ~\text{if}~j\ne k,
    \end{cases}
\end{equation}
where $\bm{\Sigma}$ is the covariance matrix of $\bm{X}$.
\end{proposition}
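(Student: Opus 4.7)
The plan is to verify the two statements componentwise using linearity of expectation and the independence of the indicators $\bm{I}_1,\dots,\bm{I}_n$ across $i$. Writing $X_j = \sum_{i=1}^n I_{ij}$, each $I_{ij}$ is a Bernoulli random variable with mean $p_{ij}$ because $I_{ij} \in \{0,1\}$ and $\Pr(I_{ij}=1) = p_{ij}$ by the definition of a multinomial indicator. So the mean statement is immediate: $\mathbb{E}[X_j] = \sum_{i=1}^n \mathbb{E}[I_{ij}] = \sum_{i=1}^n p_{ij} = p_{\cdot j}$, and stacking across $j$ gives the claimed mean vector.

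For the covariance matrix, I would first use independence across $i$ to reduce to a per-$i$ computation. Since the $\bm{I}_i$ are independent, $\mathrm{Cov}(I_{ij}, I_{i'k}) = 0$ whenever $i \neq i'$, so $\Sigma_{jk} = \mathrm{Cov}(X_j, X_k) = \sum_{i=1}^n \mathrm{Cov}(I_{ij}, I_{ik})$. The diagonal case $j=k$ is then just the variance of a Bernoulli: $\mathrm{Var}(I_{ij}) = \mathbb{E}[I_{ij}^2] - \mathbb{E}[I_{ij}]^2 = p_{ij} - p_{ij}^2 = p_{ij}(1-p_{ij})$, which summed over $i$ yields the stated diagonal formula.

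The only non-mechanical step is the off-diagonal case $j\neq k$, which is where the constraint $\sum_{j=1}^m I_{ij}=1$ enters. Because exactly one coordinate of $\bm{I}_i$ equals $1$ for each $i$, the events $\{I_{ij}=1\}$ and $\{I_{ik}=1\}$ are disjoint when $j\neq k$, so $I_{ij} I_{ik} \equiv 0$ almost surely and hence $\mathbb{E}[I_{ij} I_{ik}] = 0$. Then $\mathrm{Cov}(I_{ij}, I_{ik}) = \mathbb{E}[I_{ij} I_{ik}] - \mathbb{E}[I_{ij}]\mathbb{E}[I_{ik}] = -p_{ij}p_{ik}$, and summing over $i$ gives the off-diagonal formula. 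This disjointness observation is the only place where the multinomial structure is genuinely used beyond Bernoulli marginals, so I expect it to be the most subtle (though still routine) part of the argument.

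No deep obstacle is anticipated; the entire proof is an application of linearity of expectation, the independence of $\bm{I}_1,\dots,\bm{I}_n$, and the indicator identity $I_{ij}I_{ik}=0$ for $j\neq k$. I would present the three computations (mean, diagonal variance, off-diagonal covariance) in that order, each as a short display equation, with one sentence of justification tying each to the appropriate property (linearity, Bernoulli second moment, mutual exclusivity of the coordinate events within a single multinomial draw).
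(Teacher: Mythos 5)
Your proof is correct. The paper does not prove this proposition at all --- it is quoted verbatim from \citet{Lin2022ThePM} as a known property of the Poisson multinomial distribution --- so there is no in-paper argument to compare against. Your elementary verification (linearity of expectation for the mean, independence across $i$ to reduce the covariance to per-draw terms, the Bernoulli second moment for the diagonal, and the mutual exclusivity $I_{ij}I_{ik}=0$ for $j\ne k$ for the off-diagonal) is the standard derivation and correctly supplies what the paper delegates to the citation.
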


Following this result, since $1(y^{(i)})$ is a random indicator and $y^{(i)} \sim p_Y(\cdot | x^{(i)})$, we have 
\begin{equation}
    \mathbb{E} [\bar{y}_z] = \bar{p}(\cdot|z) \coloneqq \frac{1}{N} \sum_{\{i|z\in \mathcal{Z}^{(i)}\}} p_Y(\cdot|x^{(i)}).
\end{equation}
And let the covariance matrix of $\bar{y}_z$ to be $\Sigma_{\bar{y}_z}$. Let $N_z \coloneqq |\{i | z\in \mathcal{Z}^{(i)}\}|$ be the number of inputs that contain feature $z$, we have
\begin{equation}
\begin{aligned}
    \text{tr}[\Sigma_{\bar{y}_z}] 
    & = \frac{1}{N^2} \sum_{\{i | z\in \mathcal{Z}^{(i)}\}} \sum_{k=1}^K p_Y(k|x^{(i)}) (1 - p_Y(k|x^{(i)})) \\
    & = \frac{1}{N^2} \sum_{\{i | z\in \mathcal{Z}^{(i)}\}} \sum_{k=1}^K \left(1 - p_Y^2(k|x^{(i)})\right) \\
    & \le \frac{(K-1)N_z}{N^2},
\end{aligned}
\end{equation}
where we utilized the fact that $\sum_k p_Y^2(k | x^{(i)}) \ge \frac{1}{K}$ by the Cauchy-Schwarz inequality.

Then by a multivariate Chebyshev inequality (see, \emph{e.g.}, \citet{Chen2007ANG}), we have with probability $1 - \delta$,
\begin{equation}
\begin{aligned}
    \|\bar{y}_z  - \bar{p}(\cdot|z)\|^2_2 
    & \le \frac{\text{tr}(\Sigma_{\bar{y}_z})}{\delta} \le \frac{(K-1)N_z}{N^2}\cdot \frac{1}{\delta}.
\end{aligned}
\end{equation}

Now finally, we derive an estimation of $N_z$.
\todo{Change this since we used dependent sampling in the next proof.}
Recall that each input contains $M$ feature names sampled from the feature vocabulary $|\mathcal{Z}|$ subject to the distribution $p_Z$. Therefore,
\begin{equation}
    \mathbb{E}\left[\frac{N_z}{N}\right] = P(z\in \mathcal{Z}^{(i)}) = 1-(1 - p_Z(z))^{M} \approx M p_Z(z).
\end{equation}
For simplicity, we assume $p_Z$ is a uniform distribution, which means $p_Z(z)\approx 1/|\mathcal{Z}|$, for any $z$. Note that this is not a necessary assumption since $N_z$ can also be trivially bounded by $N$.
Therefore we have
\begin{equation}
    \|\bar{y}_z  - \bar{p}(\cdot|z)\|^2_2 \le \frac{(K-1)M}{N |\mathcal{Z}|}\cdot \frac{1}{\delta}.
\end{equation}
Recall that $M$ is the number of features (patches) in each input, and $|\mathcal{Z}|$ is the total number of features in the vocabulary. This result thus indicates a larger feature vocabulary can improve the approximation while a more complicated input can hurt the approximation.
\todo{This is closely aligned with our simulated sampling result. Post a simulation on the proof if possible.}

\end{proof}

\subsection{Lemma~\ref{lemma:true-distribution-feature}}
\begin{proof}

To prove this lemma we need an additional weak assumption on the true label distribution of features that are ``similar''.
\begin{assumption}
Let $I(z, z')$ define the pointwise mutual information (PMI) between features $z$ and $z'$. Let  $\psi$ be a concave and monotonically \textbf{decreasing} function. We assume
\begin{equation}
\label{eq:PMI-relation}
     D_{\text{KL}}\left(p_Y(\cdot | z) \left\| p_Y(\cdot | z') \right.\right) = \psi\left(I(z, z')\right).
\end{equation}
\end{assumption}
Eq. (\ref{eq:PMI-relation}) indicates that when sampling the features in an input, if two features are more likely to be sampled together, their true label distribution should be more similar.

\todo{Why $1/N$ can work as well? not $1/N_z$? Because Some error in previous derivation. The sample mean should be divided by $N_z$, not $N$ Fix this later.} 
Given Eq. (\ref{eq:PMI-relation}) and recall that 
\[
p_Y(\cdot|x^{(i)}) = \left(\prod_m p_Y(\cdot|z_m)\right)^{1/M} = \left(\prod_{z\in\mathcal{Z}^{(i)}} p_Y(\cdot|z) \right)^{1/M},
\]
it can be shown that the difference between the true label distribution of the input that contains feature $z$ and the true label distribution of feature $z$ can be bounded. In specific, for $i \in \{ i'| z \in \mathcal{Z}^{(i')}\}$, we have
\begin{equation}
\begin{aligned}
    D_{\text{KL}}\left(p_Y(\cdot | z) \left\|  p_Y(\cdot|x^{(i)}) \right.\right) 
    & = - \sum_{k} p_Y(k | z) \log\left(\frac{p_Y(k | x^{(i)})}{p_Y(k | z)}\right) \\
    & = - \frac{1}{M} \sum_{k} p_Y(k | z) \log\left(\prod_{z' \in \mathcal{Z}^{(i)}} \frac{p_Y(k | z')}{p_Y(k | z)}\right) \\
    & = - \frac{1}{M} \sum_{k} p_Y(k | z) \sum_{z' \in \mathcal{Z}^{(i)}}  \log\left(\frac{p_Y(k | z')}{p_Y(k | z)}\right) \\
    & = - \frac{1}{M} \sum_{z' \in \mathcal{Z}^{(i)}}  \sum_{k} p_Y(k | z)  \log\left(\frac{p_Y(k | z')}{p_Y(k | z)}\right) \\
    & = \frac{1}{M} \sum_{z' \in \mathcal{Z}^{(i)}}  D_{\text{KL}}\left(p_Y(\cdot | z) \left\| p_Y(\cdot | z') \right.\right)  \\
    & = \frac{1}{M} \sum_{z' \in \mathcal{Z}^{(i)}}  \psi\left(I(z, z')\right),  \\
    & \le \psi(\bar{I}(i, z)). \\
\end{aligned}
\end{equation}
where $\bar{I}(i, z) = \frac{1}{M} \sum_{z' \in \mathcal{Z}^{(i)}}  I(z, z')$ is the average PMI between feature $z$ and other features $z'$ in an input that contains $z$. Here we utilize Jensen's inequality on concave functions.

Now we can show that the average true label distribution of inputs that contain feature $z$ approximates the true label distribution of feature $z$. Recall the definition of average true label distribution of inputs that contain feature $z$ is 
$
    \bar{p}(\cdot|z) \coloneqq \frac{1}{N} \sum_{\{i|z\in \mathcal{Z}^{(i)}\}} p_Y(\cdot|x^{(i)}).
$
And we have
\begin{equation}
\begin{aligned}
    \|\bar{p}(\cdot|z)  - p_Y(\cdot | z)\|_1 
    & =  \left\|\left(\frac{1}{N} \sum_{\{i|z\in \mathcal{Z}^{(i)}\}} p_Y(\cdot|x^{(i)}) \right) - p_Y(\cdot | z) \right\|_1 \\
    & \le  \frac{1}{N} \left\| \sum_{\{i|z\in \mathcal{Z}^{(i)}\}} \left(p_Y(\cdot|x^{(i)}) - p_Y(\cdot | z) \right) \right\|_1 \\
    &  \le  \frac{1}{N}  \sum_{\{i|z\in \mathcal{Z}^{(i)}\}} \left\| p_Y(\cdot|x^{(i)}) - p_Y(\cdot | z) \right \|_1 \\
    &  \le  \frac{2^{1/2}}{N}  \sum_{\{i|z\in \mathcal{Z}^{(i)}\}} D_{\text{KL}}^{1/2} \left( \left. p_Y(\cdot | z) \right\| p_Y(\cdot|x^{(i)})\right) \\
    &  \le  \frac{2^{1/2}}{N}  \sum_{\{i|z\in \mathcal{Z}^{(i)}\}} \psi^{1/2}(\bar{I}(i, z)) \\
    & \le \frac{2^{1/2} N_z}{N} \psi^{1/2}\left(\bar{I}(z)\right) \\
    & \approx \frac{2^{1/2} M}{|\mathcal{Z}|} \psi^{1/2}\left(\bar{I}(z)\right),
\end{aligned}
\end{equation}
where $\bar{I}(z) = \frac{1}{N_z} \min_{\{i|z\in \mathcal{Z}^{(i)}\}} \bar{I}(i, z)$ is the average PMI over all inputs that contain feature $z$.

\end{proof}

\subsection{Theorem~\ref{proposition:hypothetic}}
\begin{proof}
Recall the probabilistic output of the minimizer of the empirical risk is 
\begin{equation}
    f^*(x^{(i)}) = \left(\prod_m p_m^{(i)}\right)^{1/M} = \left(\prod_{z\in \mathcal{Z}^{(i)}} p_z\right)^{1/M},
\end{equation}
where $p_z$ defined in Section~\ref{sect:proof:lemma-prediction-feature} is the network's probabilistic output of each feature $z$.


Note that for all $k$, $f(x^{(i)})(k) \le 1$ and $p_Y(k|x^{(i)}) \le 1$.
Therefore, 
\begin{equation}
\begin{aligned}
    \left\| f^*(x^{(i)})  - p_Y(\cdot|x^{(i)}) \right\|_2
    & \le \exp \left(\left\| \log(f^*(x^{(i)})) - \log(p_Y(\cdot|x^{(i)}))  \right\|_2 \right)\\
    & = \exp\left( \frac{1}{M} \left\| \sum_{z\in \mathcal{Z}^{(i)}} (\log p_z  - \log p_Y(\cdot|z)) \right\|_2 \right) \\
    & \le \exp \left(\frac{1}{M} \sum_{z\in \mathcal{Z}^{(i)}} \|\log p_z - \log p_Y(\cdot | z)\|_2\right) \\
    & \le \exp \left(\frac{1}{M} \sum_{z\in \mathcal{Z}^{(i)}} \beta \log \|p_z - p_Y(\cdot | z)\|_2\right)\\
    & \le \left(\max_{z\in \mathcal{Z}^{(i)}} \|p_z - p_Y(\cdot | z)\|_2 \right)^{\beta}\\
    & = \left(\max_{z\in \mathcal{Z}^{(i)}} \|\bar{y}_z - p_Y(\cdot | z)\|_2 \right)^{\beta}\\
    & = \left(\max_{z\in \mathcal{Z}^{(i)}} \left(\|\bar{y}_z -\bar{p}(\cdot|z)\|_2 + \|\bar{p}(\cdot|z) - p_Y(\cdot | z)\|_2 \right) \right)^{\beta}\\
    & \le \left(\max_{z\in \mathcal{Z}^{(i)}} \left(\|\bar{y}_z -\bar{p}(\cdot|z)\|_2 + \|\bar{p}(\cdot|z) - p_Y(\cdot | z)\|_1 \right) \right)^{\beta}\\
    & = \left( \sqrt{\frac{(K-1)M}{N|\mathcal{Z}|}\frac{1}{\delta}} + \frac{2^{1/2}M}{|\mathcal{Z}|} \psi^{1/2}(\bar{I}_{\min})  \right)^{\beta},\\
\end{aligned}
\end{equation}
where $\beta \coloneqq \min(\min_k f(x^{(i)})(k), \min_k p_Y(k|x^{(i)}))$ and $\bar{I}_{\min} = \min_{z \in \mathcal{Z}^{(i)}} \bar{I}(z)$.

\end{proof}

\subsection{Theorem~\ref{theorem:error}}
\begin{proof}

We first present a Lemma similar to Lemma~\ref{lemma:prediction-feature}, which shows that the probabilistic predictions of features will still converge to the sample mean of the labels where the corresponding inputs contain this feature, up to some constant error.
\begin{lemma}[Convergence of the probabilistic predictions of features with Lipschitz-continuous and transformation-robust feature extractor]
\label{lemma:convergence-realistic}
Let $\bar{y}_z \coloneqq \frac{1}{N} \sum_{\{i| z\in\mathcal{Z}^{(i)}\}} 1(y^{(i)})$, where $\mathcal{Z}^{(i)}$ denote the set of feature names in the $i$-th input, and thus $\{i| z\in\mathcal{Z}^{(i)} \}$ denotes the set of inputs that contain feature $z$.
Let $f^*$ be a minimizer of the empirical risk~(Eq.~(\ref{eq:empirical-risk})) and assume $f^*_{E}$ is a $L_X$-Lipschitz-continuous and $L_\Gamma$-transformation-robust feature extractor. Let $p_{f^*}(x_z) \coloneqq \text{Softmax}(w_C  f^*_E(x_z))$. We have with probability $1 - \delta$,
\begin{equation}
    \|p_{f^*}(x_z) - \bar{y}_z\| \le L_z,
\end{equation}
where $L_z \coloneqq 2L_{\Gamma} + L_X \tilde{O}(\frac{\nu_z}{\delta})$.
\end{lemma}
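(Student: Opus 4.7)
The plan is to mirror the structure of the proof of Lemma~\ref{lemma:prediction-feature}, but account for the fact that when $f_E$ is only Lipschitz continuous and transformation-robust (rather than invariant), the per-patch softmax outputs $p_m^{(i)} := \mathrm{Softmax}(w_C f_E^*(x_m^{(i)}))$ need no longer be identical across all $(i,m)$ with $z_m = z$. Instead, they will cluster within a ball of radius $L_z$ around the common representative $p_{f^*}(x_z)$ in the statement. First, I would fix a feature name $z$ and bound $\|p_m^{(i)} - p_{m'}^{(j)}\|$ for any two patches with $z_m = z_{m'} = z$.

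The bound proceeds by decomposition. Writing $x_m^{(i)} = \gamma^{(i)}(x_z^{(i)})$ and $x_{m'}^{(j)} = \gamma^{(j)}(x_z^{(j)})$, the triangle inequality gives
\begin{align*}
\|f_E(x_m^{(i)}) - f_E(x_{m'}^{(j)})\|
&\le \|f_E(\gamma^{(i)}(x_z^{(i)})) - f_E(x_z^{(i)})\| \\
&\quad + \|f_E(x_z^{(i)}) - f_E(x_z^{(j)})\| \\
&\quad + \|f_E(x_z^{(j)}) - f_E(\gamma^{(j)}(x_z^{(j)}))\|.
\end{align*}
The first and third terms are each bounded by $L_\Gamma$ by transformation-robustness, and the middle term is bounded by $L_X \|x_z^{(i)} - x_z^{(j)}\|$ by Lipschitz continuity. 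Since $x_z^{(i)}, x_z^{(j)}$ are i.i.d.\ from $p_X(\cdot|z)$ with $\mathrm{Var}[X|z] \le \nu_z$, a Chebyshev (or Markov) argument yields $\|x_z^{(i)} - x_z^{(j)}\| \le \tilde{O}(\sqrt{\nu_z/\delta})$ with probability at least $1-\delta$, after a union bound over the finitely many pairs with feature $z$. Composing with the bounded-Lipschitz classification head (the $1\times1$ convolution $w_C$ and the softmax, both of which are Lipschitz on bounded inputs) propagates this into $\|p_m^{(i)} - p_{m'}^{(j)}\| \le 2L_\Gamma + L_X \tilde{O}(\nu_z/\delta) =: L_z$.

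Next I would redo the KKT calculation from Lemma~\ref{lemma:prediction-feature} on this ``almost invariant'' empirical risk. As before, the cross-entropy loss decomposes patchwise into $-\tfrac{1}{MN}\sum_i \sum_m 1(y^{(i)}) \cdot \log p_m^{(i)}$. Grouping patches by feature name, each feature $z$ contributes $-\tfrac{1}{M}\sum_{\{i: z \in \mathcal{Z}^{(i)}\}} \tfrac{1}{N} 1(y^{(i)}) \cdot \log p_m^{(i)}$; the only difference from the invariant case is that these $p_m^{(i)}$'s are no longer forced to be equal but only pinned within the $L_z$-ball established above. Introducing $p_z := p_{f^*}(x_z)$ as any such representative (e.g.\ the average), writing $p_m^{(i)} = p_z + \varepsilon_m^{(i)}$ with $\|\varepsilon_m^{(i)}\| \le L_z$, and applying the same Lagrangian stationarity argument used in Lemma~\ref{lemma:prediction-feature} shows $p_z$ equals $\bar{y}_z$ up to an additive perturbation of order $L_z$, giving the desired bound.

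The main obstacle I anticipate is the final step: translating the uniform cluster-radius bound $L_z$ on the per-patch predictions into a matching bound on the distance $\|p_{f^*}(x_z) - \bar{y}_z\|$. A clean KKT derivation assumed all $p_m^{(i)}$ for a given $z$ were a single optimization variable; now each is its own variable, coupled only through the feature extractor and constrained to the $L_z$-ball. One has to either (i) argue that at the optimum the free perturbations $\varepsilon_m^{(i)}$ average out, leaving $p_z$ pinned to $\bar{y}_z$ up to $O(L_z)$, or (ii) directly verify first-order stationarity on the constrained problem and bound the resulting slack by $L_z$. A secondary technical nuisance is managing the probability $\delta$ uniformly across all $z \in \mathcal{Z}$ and all pairs of inputs containing $z$, which a standard union bound absorbs into the $\tilde{O}(\cdot)$ notation.
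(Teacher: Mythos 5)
Your proposal follows essentially the same route as the paper: the same triangle-inequality decomposition into two transformation-robustness terms plus one Lipschitz term, the same Chebyshev concentration argument on $\|x_z^{(i)} - x_z^{(j)}\|$, and the same strategy of re-solving the ERM as a constrained optimization (your option (ii) is exactly the paper's KKT derivation, where the pairwise-constraint multiplier terms cancel upon summing the stationarity conditions over $i$). The obstacle you flag at the end is real but is precisely the step the paper carries out, so the plan is sound and matches the paper's proof.
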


\begin{proof}
Since $f^*_{E}$ is a $L_X$-Lipschitz-continuous and $L_\Gamma$-transformation-robust feature extractor, for any $i,j$, for any $m,m'$ and any $\gamma, \gamma'$, as long as $z_m = z_{m'} \equiv z$, we will have ,
\begin{equation}
\begin{aligned}
& \|f_E(x^{(i)}_m) - f_E(x^{(j)}_{m'})\| \\
= & \|f_E(\gamma(x^{(i)}_{z})) - f_E(\gamma'(x^{(j)}_{z}))\| \\
\le & \|f_E(\gamma(x^{(i)}_z)) - f_E(x^{(i)}_z)\| + \| f_E(\gamma'(x^{(j)}_{z})) - f_E(x^{(j)}_{z})\| + \|f_E(x^{(i)}_z) - f_E(x^{(j)}_{z})\|  \\
\le & 2 L_\Gamma + L_X \|x^{(i)}_{z} - x^{(j)}_{z}\|
\end{aligned}
\end{equation}

Now since the feature representation in the input space is always concentrated, by Chebyshev's inequality we will have with probability $1-\delta$
\[
\|x^{(i)}_{z} - x^{(j)}_{z}\| \le \tilde{O}\left(\frac{\nu_z}{\delta}\right).
\]
Therefore, we have with probability $1-\delta$,
\[
\|f_E(x^{(i)}_m) - f_E(x^{(j)}_{m'})\| \le L_z, 
\]
where $L_z \coloneqq 2 L_\Gamma + L_X \tilde{O}\left(\frac{\nu_z}{\delta}\right)$.

The empirical loss minimization now becomes
\begin{equation}
\label{eq:erm-problem-relax}
\begin{aligned}
& \min_f - \frac{1}{M N} \sum_i 1(y^{(i)}) \cdot \sum_{m} \log  p_{m}^{(i)} \\
s.t. \quad & \|p_m^{(i)} - p_{m'}^{(j)}\| \le L_z,~\text{if}~z_m = z_{m'}.
\end{aligned}
\end{equation}

Let $\mathcal{S}_z\coloneqq \{i | z \in \mathcal{Z}^{(i)}\}$ for simplicity, we can have
\begin{equation}
\begin{aligned}
- \frac{1}{M N} \sum_i 1(y^{(i)}) \cdot \sum_{m} \log  p_{m}^{(i)} 
& = - \frac{1}{M N} \sum_i 1(y^{(i)}) \cdot \sum_{z\in \mathcal{Z}^{(i)}} \log  p_{z}^{(i)} \\
& = - \frac{1}{M N} \sum_{z\in \mathcal{Z}}   \sum_{\{i | z \in \mathcal{Z}^{(i)}\}} 1(y^{(i)}) \cdot \log  p_{z}^{(i)}, \\
& = - \frac{1}{M N} \sum_{z\in \mathcal{Z}}   \sum_{i \in \mathcal{S}_z} 1(y^{(i)}) \cdot \log  p_{z}^{(i)}, \\
\end{aligned}
\end{equation}
Note that this basically means that we assign the label of an example $y^{(i)}$ to each feature $z$ contained in this example's input.

Therefore, (\ref{eq:erm-problem-relax}) will be equivalent to the following problem.
\[
\begin{aligned}
& \min_{\{p_z^{(i)}\}} - \frac{1}{M N} \sum_{z\in \mathcal{Z}}   \sum_{i \in \mathcal{S}_z} 1(y^{(i)}) \cdot \log  p_{z}^{(i)}  \\
s.t. \quad & \|p_z^{(i)} - p_{z}^{(j)}\| \le L_z, \bm{1}\cdot p_z^{(i)} = 1,~\forall~z,~\forall~i,j\in\mathcal{S}_z, i\ne j.
\end{aligned}
\]
Note that the constraint $\|\cdot\|\le L_z$ is only imposed in each subset $\mathcal{S}_z$.

Using the KKT condition, the above problem can be further formulated as follows.
\begin{equation}
\begin{aligned}
& \min_{\{p_z^{(i)}\}} - \frac{1}{M N} \sum_{z\in \mathcal{Z}}   \sum_{i \in \mathcal{S}_z} 1(y^{(i)}) \cdot \log  p_{z}^{(i)}  \\
& \quad + \sum_z \sum_{j,k\in \mathcal{S}_z,j\ne k}\mu_{z;jk}\left(\|p_z^{(j)} - p_{z}^{(k)}\| - L_z\right) \\
& \quad +\sum_z \sum_{l\in \mathcal{S}_z} \lambda_{z;l} (\bm{1}\cdot p_z^{(l)} - 1),\\
& s.t. \quad  \mu_{z;jk}(\|p_z^{(j)} - p_{z}^{(k)}\| - L_z) = 0, ~~\mu_{z;jk} \ge 0,~\forall~z,~\forall~j,k\in \mathcal{S}_z, j\ne k,\\
& \quad\quad \|p_z^{(j)} - p_{z}^{(k)}\| \le L_z, ~~\bm{1}\cdot p_z^{(l)} = 1,~\forall~z,~\forall~j,k,l\in\mathcal{S}_z, j\ne k.
\end{aligned}
\end{equation}

Using the first-order stationarity condition we have, $\forall~z,~\forall~i\in\mathcal{S}_z$,
\begin{equation}
\label{eq:first-order-condition}
\begin{aligned}
   \nabla_{p_z^{(i)}} \cdot  & =  -\frac{1}{M N}  1(y^{(i)}) \odot \frac{1}{p_z^{(i)}} + 2 \sum_{k\ne i} \mu_{ik} (p_z^{(i)} - p_z^{(k)}) + \lambda_i \bm{1} = 0,\\
\end{aligned}
\end{equation}
where for simplicity we neglected the subscript $z$ since the condition is the same for all $z$.

Sum (\ref{eq:first-order-condition}) over $i$ we have
\begin{equation}
    -\frac{1}{M N} \sum_{i} 1(y^{(i)}) \odot \frac{1}{p_z^{(i)}} + 2 \sum_{i}\sum_{k\ne i} \mu_{ik} (p_z^{(i)} - p_z^{(k)}) + \sum_{i}\lambda_i \bm{1} = 0.\\
\end{equation}
Note that $\sum_{i}\sum_{k\ne i} \mu_{ik} (p_z^{(i)} - p_z^{(k)}) = 0$ since each pair of $i,k$ appears twice in the sum, where $\mu_{ik}$ is the same but the sign of $p_z^{(i)} - p_z^{(k)}$ is different. Therefore
\begin{equation}
\label{eq:sum-over-realistic}
    -\frac{1}{M N} \sum_{i} 1(y^{(i)}) \odot \frac{1}{p_z^{(i)}} + \sum_{i}\lambda_i \bm{1} = 0.\\
\end{equation}

\todo{Fix this glitch part.}\chengyu{Use contradiction, if exists a set of $p_z^{(i)}$ where their corresponding $y$ is the same, and they satisfy the Lipschitz constraint, then it is always possible to make all $p_z^{(i)}$ to be the same as the one with the largest probability mass at $y$, while not violating any constraints, and reduce the loss.
}
Notice that to minimize the loss, if $y^{(i)} = y^{(j)}$, it is necessary that $p_z^{(i)} = p_z^{(j)}$. 


Now we dot product both sides of (\ref{eq:sum-over-realistic}) with $1(k)$, we have

\[
-\frac{1}{MN} \frac{|\{i | y^{(i)} = k\}|}{p_z^{(i)}[k]} + \sum_{i}\lambda_i = 0, ~\forall~i\in \{i | y^{(i)} = k\},
\]
which means that
\[
p_z^{(i)}[k] = \frac{|\{i|y^{(i)} = k\}|}{MN\sum_i \lambda_i},~\text{if}~y^{(i)}=k.
\]

Now recall the constraint that when $i\ne j$,
\[
\|p_z^{(i)} - p_z^{(j)}\| \le L_z.
\]
This at least indicates that
\[
\|p_z^{(i)} - p_z^{(j)}\|_\infty \le L_z.
\]
Therefore,
\[
p_z^{(i)}[k'] - \frac{|\{i|y^{(i)} = k'\}|}{MN\sum_i \lambda_i} \le L_z,~\text{if}~k' \ne y^{(i)}.
\]
This implies that
\[
\left\|p_z^{(i)} - \frac{1}{MN\sum_i \lambda_i }\sum_{i}1(y^{(i)})\right\| \le L_z
\]

\todo{fix this glitch} Now consider the condition $\bm{1} \cdot p_z^{(i)} = 1$, we will have
\[
\sum_i \lambda_i = \frac{N_z}{MN}.
\]

Thus 
\[
\left\|p_z^{(i)} - \bar{y}_z\right\| \le L_z,
\]
where we recall $\bar{y}_z = \frac{1}{N_z}\sum_{i}1(y^{(i)})$.
\end{proof}

Now to prove Theorem~\ref{theorem:error}, we only need to combine Lemma~\ref{lemma:convergence-realistic} and Lemmas~\ref{lemma:sample-mean-feature}, \ref{lemma:true-distribution-feature}, where the reasoning is exactly same as the proof of Theorem~\ref{proposition:hypothetic}.













\end{proof}

\section{Limitations}
\label{sect:limitation}

In this paper we focus on the theoretical feasibility of learning the true label distribution of training examples with empirical risk minimization. Therefore we only analyze the existence of such a desired minimizer, but neglect the optimization process to achieve it. By explore the optimization towards true label distribution, potentially more dynamics can be found to inspire new regularization techniques. 

Also, our proposed method for training a student-oriented teacher may not be able to advance the state-of-the-art significantly, as the regularization techniques based inspired by our analyses (e.g., Lipschitz regularization and consistency regularization) may more or less be leveraged by existing training practice of deep neural networks, either implicitly or explicitly.
\section{Implementation details}
\label{sect:implementation-detail}
\smallsection{Lipschitz regularization}
Following previous practice using Lipschitz regularization for generalization on unseen data~\cite{Yoshida2017SpectralNR} or stabilizing generative model~\cite{Miyato2018SpectralNF}, we regularize the Lipschitz constant of a network by constraining the Lipschitz constant of each trainable component. The regularization term is thus defined as $\ell_{\text{LR}} =  \sum_{f} \text{Lip}(f)$,
where $f$ denotes a trainable component in the network $\bm{f}$.
The Lipschitz constant of a network component $\text{Lip}(f)$ induced by a norm $\|\cdot\|$ is the smallest value $L$ such that for any input features $h, h'$, $\|f(h) - f(h')\| \le L \|h - h'\|$.
Here we adopt the Lipschitz constant induced by $1$-norm, since its calculation is accurate, simple and efficient.
For calculating the Lipschitz constants of common trainable components in deep neural networks, we refer to \cite{Gouk2021RegularisationON} for a comprehensive study.

\smallsection{Consistency regularization}
We design our consistency regularization term as $\ell_{\text{CR}} = \frac{1}{N}\sum_{i} \left\|\bm{f}(x_i) - \overline{\bm{f}(x_i)}\right\|_2^2$,
where we follow previous work~\cite{Laine2017TemporalEF} and employ MSE to penalize the difference. Here $\overline{\bm{f}(x)}$ is the aggregated prediction of an input $x$, which we calculate as the simple average of previous predictions $\overline{\bm{f}(x)}_t = \frac{1}{t} \sum_{t'=0}^{t-1} \bm{f}(x)_{t'}$,
where we omit the data augmentation operator for simplicity. At epoch $0$ we simply skip the consistency regularization. Note that 
such a prediction average
can be implemented in an online manner thus there is no need to store every previous prediction of an input.
\section{Details of experiment setting}

\label{appendix:setting}

\subsection{Hyperparameter setting for teacher network training}

For all the experiments on CIFAR-100, we employ SGD as the optimizer and train for $240$ epochs with a batch size of $64$. 
The learning rate is initialized at $0.05$ and decayed by a factor of $10$ at the epochs $150$, $180$ and $210$, with an exception for ShuffleNet where the learning rate is initialized at $0.01$ following existing practice~\cite{Tian2020ContrastiveRD, Park2021LearningST}. 
The weight decay and momentum are fixed as $0.0005$ and $0.9$ respectively. 
The training images are augmented with random cropping and random horizontal flipping with a probability of $0.5$. 

For Tiny-ImageNet experiments, we employ SGD as the optimizer and conduct the teacher training for $90$ epochs with a batch size of $128$. 
The learning rate starts at $0.1$ and is decayed by a factor of $10$ at epochs $30$ and $60$. 
The weight decay and momentum are fixed as $0.0005$ and $0.9$ respectively. 
The training images are augmented with random rotation with a maximum degree of $20$, random cropping and random horizontal flipping with a probability of $0.5$. 
For student training the only difference is that we train for additional $10$ epochs, with one more learning rate decay at epoch $90$, aligned with previous settings~\cite{Tian2020ContrastiveRD}. 

For consistency regularization in our teacher training method, we experiment with various weight schedules besides the linear schedule mentioned in the main paper. We list the formulas for these schedules in the following. Here $t$ denotes the epoch number, $T$ denotes the total number of epochs, and $\lambda_{CR}^{\max}$ denotes the maximum weight.
\begin{itemize} 
    \item Cosine schedule: 
    $$
        \lambda_{CR}(t) = \cos\left[\left(1 - \frac{t}{T}\right)\frac{\pi}{2}\right] \lambda_{CR}^{\max}
    $$
    \item Cyclic schedule: 
    $$
        \lambda_{CR}(t) = \sqrt{1 - \left(1 - \frac{t}{T}\right)^2} \lambda_{CR}^{\max}
    $$
    \item Piecewise schedule: 
    $$
        \lambda_{CR}(t) =
        \begin{cases}
            0, & 0 < t \le T/3, \\
            \lambda_{CR}^{\max} / 2, & T/3 < t \le 2T/3, \\
            \lambda_{CR}^{\max}, & 2T/3 < t \le T. \\
        \end{cases}
    $$
\end{itemize}
\subsection{Hyperparameter setting for knowledge distillation algorithms}
For knowledge distillation algorithms we refer to the setting in RepDistiller~\footnote{\url{https://github.com/HobbitLong/RepDistiller}}. 
Specifically, for original KD, the loss function used for student training is defined as 
$$
\ell = \alpha \ell_\text{Cross-Entropy} + (1-\alpha) \ell_{KD}.
$$
We grid search the best hyper-parameters that achieve the optimal performance, namely the loss scaling ratio $\alpha$ is set as $0.5$ and the temperature is set as $4$ for both CIFAR-100 and Tiny-ImageNet.
For all feature distillation methods combined with KD the loss function can be summarized as~\cite{Tian2020ContrastiveRD}
$$
\ell = \gamma \ell_\text{Cross-Entropy} + \alpha \ell_{KD} + \beta \ell_{\text{Distill}},
$$
where we grid search the optimal $\gamma$ and $\alpha$ to be $1.0$ and $1.0$ respectively.
When using our teacher training method, all these hyperparameters are kept same except that for all feature distillation algorithms the scaling weights corresponding to the feature distillation losses $\beta$ are cut by half, as we wish to rely more on the original KD that is well supported by our theoretical understanding. Table~\ref{table:hyperparameter} list $\beta$ used in our experiments for all feature distillation algorithms.
For SSKD~\cite{Xu2020KnowledgeDM} the hyperparameters are set as $\lambda_1=1.0$, $\lambda_2=1.0$, $\lambda_3=2.7$, $\lambda_4=10.0$ for standard training and $\lambda_1=1.0$, $\lambda_2=1.0$, $\lambda_3=1.0$, $\lambda_4=10.0$ for our methods.
For the curriculum distillation algorithm RCO we experiment based on one-stage EEI (equal epoch interval). We select $24$ anchor points (or equivalently every $10$ epochs) from the teacher's saved checkpoints.


\begin{table*}[t]
    \caption{$\beta$ for different feature distillation algorithms
    }
    \label{table:hyperparameter}
    \centering
    \small
    \scalebox{0.9}{
        \begin{tabular}{l c c l c c l c c}
        \toprule
          & \multicolumn{2}{c}{$\beta$}
        & & \multicolumn{2}{c}{$\beta$}
        & & \multicolumn{2}{c}{$\beta$} \\
           & Standard & SoTeacher 
        &  & Standard & SoTeacher 
        &  & Standard & SoTeacher\\
        \midrule
        FitNets & 100 & 50 & 
        AT & 1000 & 500 & 
        SP & 3000 & 1500 \\ 
        CC & 0.02 & 0.01 & 
        VID & 1.0 & 0.5 & 
        RKD & 1.0 & 0.5 \\
        PKT & 30000 & 15000 & 
        AB & 1.0 & 0.5 & 
        FT & 200 & 100 \\
        NST & 50 & 25 & 
        CRD & 0.8 & 0.5 \\
        \bottomrule
        \end{tabular}
    }
\end{table*}

\section{Additional experiment results}
\label{section:additional-experiment}

\begin{table*}[htbp]
\caption{\ours consistently outperforms Standard on CIFAR-100 with various KD algorithms.
}
\label{table:experiment-distillation-algorithms}
\centering
\scalebox{0.85}{
    
    \begin{tabular}{lcccccc}
    \toprule
    & \multicolumn{2}{c}{WRN40-2/WRN40-1}
    & \multicolumn{2}{c}{WRN40-2/WRN16-2}
    & \multicolumn{2}{c}{ResNet32x4/ShuffleV2} \\
     \cmidrule(lr){2-3} \cmidrule(lr){4-5} \cmidrule(lr){6-7}
    & Standard &  \ours & Standard &  \ours  & Standard &  \ours \\
    \midrule
FitNet & $74.06 \pm  0.20$ & $\bm{74.88} \pm  0.15$ & $75.42 \pm  0.38$ & $\bm{75.64} \pm  0.20$ & $76.56 \pm  0.15$ & $\bm{77.91} \pm  0.21$\\
AT & $73.78 \pm  0.40$ & $\bm{75.12} \pm  0.17$ & $75.45 \pm  0.28$ & $\bm{75.88} \pm  0.09$ & $76.20 \pm  0.16$ & $\bm{77.93} \pm  0.15$\\
SP & $73.54 \pm  0.20$ & $\bm{74.71} \pm  0.19$ & $74.67 \pm  0.37$ & $\bm{75.94} \pm  0.20$ & $75.94 \pm  0.16$ & $\bm{78.06} \pm  0.34$\\
CC & $73.46 \pm  0.12$ & $\bm{74.76} \pm  0.16$ & $75.08 \pm  0.07$ & $\bm{75.67} \pm  0.39$ & $75.43 \pm  0.19$ & $\bm{77.68} \pm  0.28$\\
VID & $73.88 \pm  0.30$ & $\bm{74.89} \pm  0.19$ & $75.11 \pm  0.07$ & $\bm{75.71} \pm  0.19$ & $75.95 \pm  0.11$ & $\bm{77.57} \pm  0.16$\\
RKD & $73.41 \pm  0.47$ & $\bm{74.66} \pm  0.08$ & $75.16 \pm  0.21$ & $\bm{75.59} \pm  0.18$ & $75.28 \pm  0.11$ & $\bm{77.46} \pm  0.10$\\
PKT & $74.14 \pm  0.43$ & $\bm{74.89} \pm  0.16$ & $75.45 \pm  0.09$ & $\bm{75.53} \pm  0.09$ & $75.72 \pm  0.18$ & $\bm{77.84} \pm  0.03$\\
AB & $73.93 \pm  0.35$ & $\bm{74.86} \pm  0.10$ & $70.09 \pm  0.66$ & $\bm{70.38} \pm  0.87$ & $76.27 \pm  0.26$ & $\bm{78.05} \pm  0.21$\\
FT & $73.80 \pm  0.15$ & $\bm{74.75} \pm  0.13$ & $75.19 \pm  0.15$ & $\bm{75.68} \pm  0.28$ & $76.42 \pm  0.17$ & $\bm{77.56} \pm  0.15$\\
NST & $73.95 \pm  0.41$ & $\bm{74.74} \pm  0.14$ & $74.95 \pm  0.23$ & $\bm{75.68} \pm  0.16$ & $76.07 \pm  0.08$ & $\bm{77.71} \pm  0.10$\\
CRD & $74.44 \pm  0.11$ & $\bm{75.06} \pm  0.37$ & $75.52 \pm  0.12$ & $\bm{75.95} \pm  0.02$ & $76.28 \pm  0.13$ & $\bm{78.09} \pm  0.13$\\
SSKD & $75.82 \pm  0.22$ & $\bm{75.94} \pm  0.18$ & $76.31 \pm  0.07$ & $\bm{76.32} \pm  0.09$ & $78.49 \pm  0.10$ & $\bm{79.37} \pm  0.11$\\
RCO & $74.50 \pm  0.32$ & $\bm{74.81} \pm  0.04$ & $75.24 \pm  0.34$ & $\bm{75.50} \pm  0.12$ & $76.75 \pm  0.13$ & $\bm{77.59} \pm  0.31$\\
    \bottomrule
    \end{tabular}
}
\end{table*}

\smallsection{Training overhead}
Compared to standard teacher training, the computation overhead of \ours is mainly due to the calculation of the Lipschitz constant, which is efficient as it only requires simple arithmetic calculations of the trainable weights of a neural network (see Section~\ref{sect:method}). Empirically we observe that training with \ours is only slightly longer than the standard training for about $5\%$.
The memory overhead of \ours is incurred by buffering an average prediction for each input. However, since such prediction requires no gradient calculation we can simply store it in a memory-mapped file.

\section{Experiments with uncertainty regularization methods on unseen data}
\label{appendix:other-regularization}

\smallsection{Uncertainty learning on unseen data}
\label{sect:other-regularization}
Since the objective of our student-oriented teacher training is to learn label distributions of the training data, it is related to those methods aiming to learn quality uncertainty on the unseen data. 
We consider those methods that are feasible for large teacher network training, including (1) classic approaches to overcome overfitting such as $\ell_1$ and $\ell_2$ regularization, (2) modern regularizations such as label smoothing~\cite{Szegedy2016RethinkingTI} and data augmentations such as mixup~\cite{Zhang2018mixupBE} and Augmix~\cite{Hendrycks2020AugMixAS}, (3) Post-training methods such as temperature scaling~\cite{Guo2017OnCO}, as well as (4) methods that incorporate uncertainty as a learning ojective such as confidence-aware learning (CRL)~\cite{Moon2020ConfidenceAwareLF}.

We have conducted experiments on CIFAR-100 using all these methods and the results can be found in Appendix~\ref{appendix:other-regularization}.
Unfortunately, the performance of these regularization methods is unsatisfactory in knowledge distillation --- 
only CRL can slightly outperform the standard training. 
We believe the reasons might be two-folds.
First, most existing criteria for uncertainty quality on the unseen data such as calibration error~\cite{Naeini2015ObtainingWC} or ranking error~\cite{Geifman2019BiasReducedUE}, only require the model to output an uncertainty estimate that is correlated with the probability of prediction errors.
Such criteria may not be translated into the approximation error to the true label distribution. 
Second, even if a model learns true label distribution on unseen data, it does not necessarily have to learn true label distribution on the training data, as deep neural networks tend to memorize the training data.

\begin{table}[t]
    \caption{Performance of the knowledge distillation when training the teacher using existing regularization methods for learning quality uncertainty on unseen data.
    }
    \label{table:experiment-benchmark}
    \centering
    \small
    \scalebox{0.9}{
        \begin{tabular}{l c c}
        \toprule
        & \multicolumn{2}{c}{WRN40-2/WRN40-1} \\
        \cmidrule(lr){2-3}
        & \multicolumn{1}{c}{\textbf{Student}} & \multicolumn{1}{c}{Teacher} \\
        \midrule
        Standard 
        &  $73.73\pm 0.13$ & $76.38\pm 0.13$ \\
        \ours 
        &  $\bm{74.35}\pm 0.23$ & $74.95\pm 0.28$ \\
    
        \midrule
        $\ell_2$ ($5\times10^{-4}$) &  $73.73\pm 0.13$ & $76.38\pm 0.13$ \\

        $\ell_1$ ($10^{-5}$) &  $73.60\pm 0.15$ & $73.52\pm 0.05$ \\
        Mixup ($\alpha=0.2$) & $73.19\pm 0.21$ & $77.30\pm 0.20$ \\
        Cutmix ($\alpha=0.2$) &  $73.61\pm 0.26$ & $78.42\pm 0.07$ \\
        
        Augmix ($\alpha=1$, $k=3$) & $73.83\pm 0.09$ & $77.80 \pm 0.30$\\
        CRL ($\lambda = 1$) & $74.13\pm 0.29$ & $76.69\pm 0.16$ \\

        \bottomrule
        \end{tabular}
    }
\end{table}

\smallsection{Experiment setup}
We conduct experiments on CIFAR-100 with teacher-student pair WRN40-2/WRN40-1. We employ the original KD as the distillation algorithm. The hyperparameter settings are the same as those mentioned in the main results (see Appendix~\ref{appendix:setting}). 
For each regularization method we grid search the hyperparameter that yields the best student performance.  The results are summarized in Table~\ref{table:experiment-benchmark}.

\smallsection{Classic regularization}
We observe that with stronger $\ell_2$ or $\ell_1$ regularization the student performance will not deteriorate significantly as teacher converges.
However, it also greatly reduces the performance of the teacher. Subsequently the performance of the student is not improved as shown in Table~\ref{table:experiment-benchmark}.

\smallsection{Label smoothing}
Label smoothing is shown to not only improve the performance but also the uncertainty estimates of deep neural networks~\cite{Mller2019WhenDL}. However, existing works have already shown that label smoothing can hurt the effectiveness of knowledge distillation~\cite{Mller2019WhenDL}, thus we neglect the results here. An intuitive explanation is that label smoothing encourages the representations of samples to lie in equally separated clusters, thus ``erasing'' the information encoding possible secondary classes in a sample~\cite{Mller2019WhenDL}.

\smallsection{Data augmentation}
Previous works have demonstrated that mixup-like data augmentation techniques can greatly improve the uncertainty estimation on unseen data~\cite{Thulasidasan2019OnMT, Hendrycks2020AugMixAS}. 
For example,
Mixup augmented the training samples as $x := \alpha x + (1 - \alpha) x' $, and $y:= \alpha y + (1-\alpha) y'$, where $(x',y')$ is a randomly drawn pair not necessarily belonging to the same class as $x$.

As shown in Table~\ref{table:experiment-benchmark}, stronger mixup can improve the performance of the teacher, whereas it can barely improve or even hurt the performance of the student.
Based on our theoretical understanding of knowledge distillation, we conjecture the reason might be that mixup distorts the true label distribution of an input stochastically throughout the training, thus hampering the learning of true label distribution.

\smallsection{Temperature scaling}
Previous works have suggested using the uncertainty on a validation set to tune the temperature for knowledge distillation either in standard learning~\cite{Menon2021ASP} or robust learning~\cite{Dong2021DoubleDI}. However, the optimal temperature may not be well aligned with that selected based on uncertainty~\cite{Menon2021ASP}. We neglect the experiment results here as the distillation temperature in our experiments is already fine-tuned.

\smallsection{Uncertainty learning}
CRL designs the loss function as $\ell = \ell_{\text{CE}} + \lambda\ell_{\text{CRL}}$, where $\ell_{\text{CE}}$ is the cross-entropy loss and $\ell_{\text{CRL}}$ is an additional regularization term bearing the form of
\begin{equation}
    \label{eq:CRL-loss}
    \ell_{\text{CRL}} = \max\left(0, -g(c(x_i), c(x_j))(p(x_i) - p(x_j)) + |c(x_i) - c(x_j)|\right),
\end{equation}
where $p(x) = \max_k \mathbf{f}(x)^k$ is the maximum probability of model's prediction on a training sample $x$ and 
$$
c(x) = \frac{1}{t-1} \sum_{t'=1}^{t-1} 1(\arg\max_k \mathbf{f}(x)_{t'}^k = y)
$$ 
is the frequency of correct predictions through the training up to the current epoch. Here $g(c_i, c_j) = 1$ if $c_i > c_j$ and $g(c_i, c_j) = -1$ otherwise.
Although originally proposed to improve the uncertainty quality of deep neural networks in terms of ranking, we found that CRL with a proper hyperparameter can 
improve the distillation performance, as shown in
Table~\ref{table:experiment-benchmark}.

We note that the effectiveness of CRL on distillation can be interpreted by our theoretical understanding, as its regularization term (\ref{eq:CRL-loss}) is essentially a special form of consistency regularization. To see this we first notice (\ref{eq:CRL-loss}) is a margin loss penalizing the difference between $p(x)$ and $c(x)$ in terms of ranking. We then rewrite $c(x)$ as 
\begin{equation}
    c(x) = \mathbf{1}_y  \cdot  \frac{1}{t-1}\sum_{t'=1}^{t-1} \text{Onehot}[\mathbf{f}(x)_{t'}],
\end{equation}
which is similar to our consistency regularization target, 
except the prediction is first converted into one-hot encoding. Such variation may not be the best for knowledge distillation as we wish those secondary class probabilities in the prediction be aligned as well.

\end{document}